\documentclass{article}
\usepackage[utf8]{inputenc}
\usepackage[T1]{fontenc}
\usepackage{times}
\usepackage{helvet}
\usepackage{courier}

\usepackage{amsmath}
\usepackage{amssymb}
\usepackage{amsfonts}

\usepackage{booktabs}
\usepackage{multirow}
\usepackage{makecell}
\usepackage{array}
\usepackage{tabularx}

\usepackage{graphicx}
\usepackage{xcolor}
\usepackage{caption}

\usepackage[linesnumbered,ruled,vlined]{algorithm2e}

\usepackage{enumitem}
\usepackage{tcolorbox}
\usepackage{float}
\usepackage{nicefrac}
\usepackage{microtype}
\usepackage{mathtools}
\DeclareMathOperator{\diag}{diag}
\usepackage{url}
\usepackage[colorlinks=true, linkcolor=blue, citecolor=blue, urlcolor=blue]{hyperref}

\usepackage{tikz}

\usepackage[numbers,square]{natbib}

\usepackage{arxiv}

\usepackage{amsthm}

\newtheorem{theorem}{Theorem}[section]
\newtheorem{lemma}[theorem]{Lemma}
\newtheorem{proposition}[theorem]{Proposition}

\title{ModalImmune: Immunity Driven Unlearning via Self Destructive Training}

\author{
    Rong Fu$^{\dagger}$\thanks{Corresponding author: \texttt{mc46603@um.edu.mo}} \\
    University of Macau \\
    \texttt{mc46603@um.edu.mo} \\
    \And
    WeiZhi Tang$^{\dagger}$ \\
    University of Macau \\
    \texttt{mc56666@um.edu.mo} \\
    \And
    Ziming Wang$^{\dagger}$ \\
    Zhejiang University \\
    \texttt{w986827512@zuaa.zju.edu.cn} \\
    \And
    Jia Yee Tan\\
    Renmin University of China \\
    \texttt{tanjiayi2002@ruc.edu.cn} \\
    \And
    Zijian Zhang\\
    University of Pennsylvania \\
    \texttt{zzjharry@alumni.upenn.edu} \\
    \And
    Zhaolu Kang\\
    Peking University \\
    \texttt{kangzl9966@stu.pku.edu.cn} \\
    \And
    Muge Qi\\
    Peking University \\
    \texttt{2301210659@stu.pku.edu.cn} \\
    \And
    Shuning Zhang\\
    Tsinghua University \\
    \texttt{zsn23@mails.tsinghua.edu.cn} \\
    \And
    Simon Fong\\
    University of Macau \\
    \texttt{ccfong@um.edu.mo} \\
    \AND
    \normalfont{$^\dagger$These authors contributed equally to this work.}
}
\renewcommand{\headeright}{}
\renewcommand{\undertitle}{}
\renewcommand{\shorttitle}{ModalImmune}

\hypersetup{
    pdftitle={ModalImmune: Immunity Driven Unlearning via Self Destructive Training},
    pdfsubject={cs.MM, cs.LG, cs.CL}, 
   pdfauthor={Rong Fu, WeiZhi Tang, Ziming Wang, Jia Yee Tan, Zijian Zhang, Zhaolu Kang, Muge Qi, Shuning Zhang, Simon Fong},
    pdfkeywords={multimodal robustness, immunity-driven unlearning, self-destructive training, spectral collapse, hyper-gradient adaptation},
    pdfstartview={FitH},
    colorlinks=true,
    linkcolor=red,
    citecolor=green,
    filecolor=magenta,
    urlcolor=cyan
}

\begin{document}
\maketitle

\begin{abstract}
Multimodal systems are vulnerable to partial or complete loss of input channels at deployment, which undermines reliability in real-world settings. This paper presents ModalImmune, a training framework that enforces modality immunity by intentionally and controllably collapsing selected modality information during training so the model learns joint representations that are robust to destructive modality influence. The framework combines a spectrum-adaptive collapse regularizer, an information-gain guided controller for targeted interventions, curvature-aware gradient masking to stabilize destructive updates, and a certified Neumann-truncated hyper-gradient procedure for automatic meta-parameter adaptation. Empirical evaluation on standard multimodal benchmarks demonstrates that ModalImmune improves resilience to modality removal and corruption while retaining convergence stability and reconstruction capacity.
\end{abstract}

\keywords{multimodal robustness, immunity-driven unlearning, self-destructive training, spectral collapse, hyper-gradient adaptation}

\section{Introduction}
Multimodal learning integrates complementary signals from heterogeneous channels such as text, audio and vision to achieve higher prediction accuracy and richer inference. Despite advances, models that assume complete and well-aligned inputs often fail in realistic deployments where one or more modalities may be missing, corrupted, or unavailable due to sensor faults, privacy constraints, communication dropouts, or adversarial manipulation. This fragility hinders adoption of multimodal systems in safety-critical and resource-constrained scenarios and motivates methods that improve robustness to missing or destructive modalities \cite{wu2024deep, zhang2024multimodal, reza2024robust}.

Existing remedies follow several directions. Generative imputation reconstructs missing channels from available inputs using conditional generators or diffusion models, which can restore performance in some cases but risks hallucination and incurs additional computational cost \cite{zhang2024multimodal, wu2024deep}. Architectural strategies build modular fusion, expert routing or adaptive gating to reduce over-reliance on any single modality, yet many such solutions are trained for specific missing patterns or require extra components that constrain generality \cite{li2025simmlm, reza2024robust}. Information-centric objectives and contrastive schemes aim to shape modality-invariant embeddings and to suppress spurious unimodal cues, but they do not explicitly enforce immunity against a modality that may become destructive when present \cite{liu2024contrastive, cui2024enhancing, guo2024multimodal}.

Three limitations persist. First, imputational approaches may introduce low-fidelity signals and impose runtime burdens that limit practicality. Second, many adaptive fusion or reconstruction techniques assume fixed missing patterns or add modules that hinder architectural flexibility. Third, there is a lack of principled training protocols that deliberately expose models to controlled destructive interventions so that joint representations become inherently resistant to a modality’s harmful influence while preserving task-relevant information from remaining channels.

To address these gaps we introduce ModalImmune, a unified training protocol that operationalizes immunity-driven unlearning via Self Destructive Training. The protocol repeatedly and selectively enforces a controlled collapse on a chosen modality during training so that the encoder and fusion pathways learn to down-weight or ignore destructive information. The collapse is spectrum-adaptive to remove informative directions without destabilizing optimization. An information-gain guided controller prioritizes high-impact interventions to improve training efficiency. Curvature-aware gradient masking prevents unstable gradient ascent when destructive updates are applied. Meta-parameters that govern collapse strength and stabilization are adapted by a certified Neumann-truncated hyper-gradient procedure that provides stable bi-level optimization.

Our contributions are as follows. First, we propose Self Destructive Training as a new paradigm for enforcing modality immunity by targeted, controlled information collapse during model training. Second, we design a spectrum-adaptive collapse regularizer together with an information-gain driven controller that identifies and prioritizes high-impact modality interventions. Third, we develop curvature-aware gradient masking to stabilize destructive updates and a certified Neumann-truncated hyper-gradient algorithm to adapt meta-parameters automatically. Fourth, we provide comprehensive experiments showing that ModalImmune yields representations that maintain predictive performance under modality removal and corruption while preserving convergence behavior and reconstruction capability.

\section{Related Work}
\label{sec:related}

\subsection{Learning with missing modalities}
Robust handling of incomplete modality sets is a central concern for contemporary multimodal learning. A broad family of approaches focuses on reconstructing absent channels or learning modality-agnostic representations so that downstream predictors remain effective when some inputs are unavailable. Recent works propose distillation and translation schemes that reconstruct missing semantics from available modalities and align representations across modalities to reduce the performance gap caused by missing data \cite{li2024correlation, li2024toward, luo2023multimodal}. The correlation-decoupled knowledge distillation framework and hierarchical representation learning both demonstrate that transferring structured cross-sample and category-level knowledge improves robustness under uncertain missingness. 

In parallel, medical-imaging and healthcare applications highlight practical constraints where modality absence is common. Masking-based reconstruction ideas that treat missing inputs as masked features and learn to reconstruct them via feature-to-feature mapping have been shown effective for clinical segmentation tasks \cite{zeng2024missing}. These methods demonstrate the value of exposing models during training to realistic missing patterns so they generalize to heterogeneous deployment scenarios.

\subsection{Generative imputation and joint latent models}
Generative latent-variable models and variational autoencoder families have been widely applied to multimodal imputation and embedding. Joint VAE formulations offer a principled route to infer missing channels while preserving modality-specific structure, and they have been validated on complex biological and sensing datasets \cite{cohen2023joint}. Related lines of work exploit conditional generative models and cross-modal synthesis to produce surrogate inputs that feed into fusion modules; such generative imputation often yields better downstream performance than naive zero- or mean-filling \cite{zhang2024unified, zhang2025generative}.

\subsection{Contrastive learning and representation regularization}
Contrastive objectives and prototype-guided alignment strategies are now standard tools to obtain discriminative multimodal embeddings. Methods that combine sample-level contrastive signals with prototype or curriculum mechanisms improve class separation and resilience in low-data settings \cite{sun2023progressive, mai2022multimodal}. Distillation that preserves inter-sample correlations has also been used to reconstruct missing semantics and to regularize student representations for robustness \cite{li2024correlation}. These results point to a recurring design pattern: learning objectives that explicitly shape inter- and intra-modal geometry confer robustness under partial observability. 

\subsection{Architectural strategies for partial observability}
Architectures tailored for partial modality availability include modular fusion, expert routing and reconstruction-augmented fusion. Graph and hypergraph constructions capture higher-order inter-modal relations and can mitigate the effects of modality noise and absence \cite{zhang2022m3care, zeng2024missing}. Transformer-based translation modules that map non-text modalities into richer textual spaces also yield robust joint features by leveraging pretrained textual priors \cite{liu2024modality, guo2024multimodal}. Empirical comparisons underscore that combining structured fusion with learned imputation produces more graceful degradation than monolithic fusion designs. 

\subsection{Information-theoretic and bottleneck regularizers}
Imposing information constraints on representations is an effective route to discard nuisance variability while preserving task-relevant signals. The multimodal information bottleneck and related variational formulations reduce redundancy and suppress noisy unimodal components, thereby improving generalization and robustness \cite{mai2022multimodal, cui2024enhancing}. These principles are complementary to generative imputation and can be integrated as regularizers to encourage compact, task-focused embeddings.

\subsection{Representation collapse and mitigation}
Recent analyses have examined the phenomenon of modality collapse, where fusion networks disproportionately rely on a subset of channels and ignore others. Theoretical and empirical studies show that collapse can arise from entanglement in the fusion head or rank bottlenecks in student encoders, and that targeted basis reallocation or cross-modal regularization can alleviate the problem \cite{chaudhuri2025closer}. These insights motivate design choices that explicitly preserve representational capacity per modality and that actively redistribute information when collapse tendencies are detected. 

\subsection{Optimization, bilevel adaptation and gradient control}
Hyperparameter adaptation through bilevel optimization and approximate hyper-gradient estimation has been used to tune meta-parameters that control reconstruction strength and regularization intensity \cite{jie2022adaptive, huang2021biadam, zhang2023data}. Practical algorithms exploit truncated Neumann-series approximations to stabilize hyper-gradient computation while keeping compute overhead manageable. Complementary work on gradient masking and routing demonstrates how localized gradient control can limit destructive interference across modules and enable targeted feedback mechanisms \cite{cloud2024gradient, kaul2022projective}. 

\subsection{Bandit selection, information gain and robust evaluation}
Adaptive selection of interventions and targeted perturbations benefits from information-aware bandit strategies and causal regret analyses. Information-gain driven arm selection offers a principled mechanism to identify which modality to perturb or mask during training so that the model learns robust conditional dependencies \cite{vakili2021information, yan2024causal, sukhija2024maxinforl}. For evaluation, recent benchmarks and corruption studies systematically probe model robustness under varied missingness and noise regimes, proposing standardized metrics and stress tests to quantify practical resilience \cite{liao2025benchmarking, das2023multimodal}. 

\subsection{Positioning of this work}
This study introduces a unified training strategy that integrates spectrum-adaptive collapse, curvature-aware gradient regulation, and certified hyper-gradient tuning to achieve modality immunity. Unlike imputation or single-pass distillation, the proposed protocol deliberately applies controlled destructive interventions while retaining reconstruction capability. Compared to Random Modality Dropout, which injects passive noise, our approach actively removes high-impact spectral components, creating a calibrated information deficit. Unlike adversarial training that optimizes worst-case perturbations under norm constraints, our method employs an information-theoretic controller to prioritize impactful collapses and stabilizes updates through curvature gating. These design choices distinguish the framework from existing robustness schemes and establish a principled pathway toward adaptive multimodal resilience.

\begin{figure*}[t]
  \centering
  \includegraphics[width=0.98\linewidth]{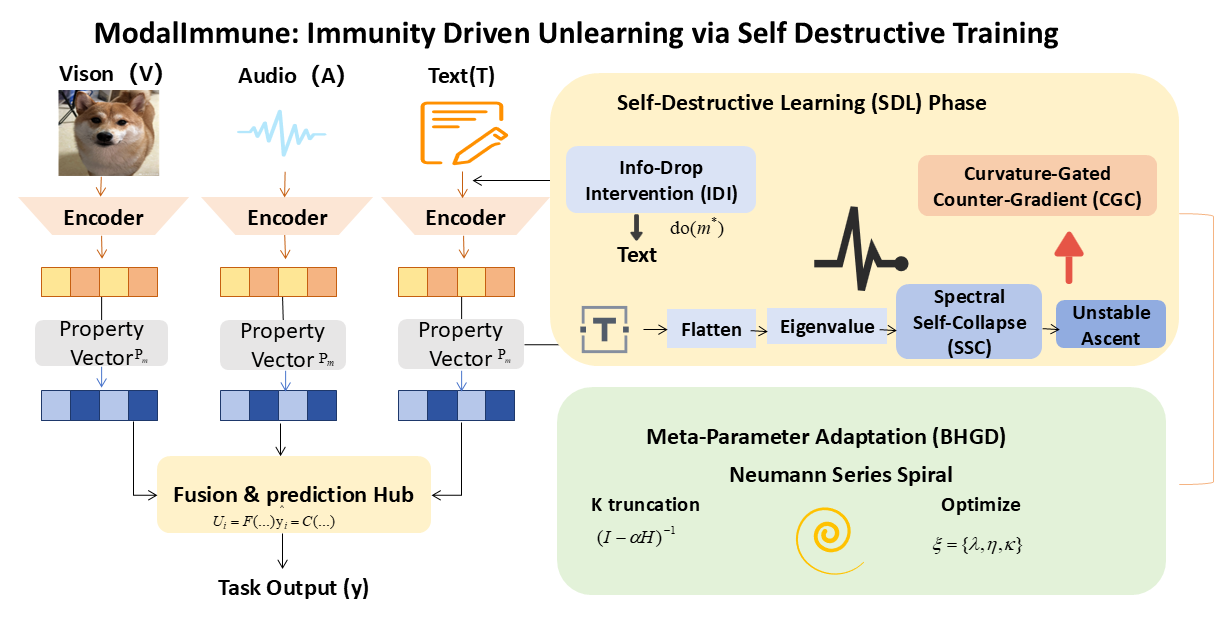}
  \caption{Overview of the \textbf{ModalImmune} framework, which treats modality destruction as an active \textbf{causal intervention}. 
  The training strategy alternates between standard reconstruction and \textbf{Self-Destructive Learning (SDL)} built on three key components: 
  \textbf{Info-Drop Intervention (IDI)}, where an EXP3.P bandit controller leverages an information-gain surrogate $\ell_m$ to adaptively select the target modality $m^\star$; 
  \textbf{Spectral Self-Collapse (SSC)}, enforcing an irreversible directional loss through a spectrum-adaptive regularizer $L_{\mathrm{coll}}$ combined with a stable-rank penalty; 
  \textbf{Curvature-Gated Counter-Gradient (CGC)}, which inspects empirical Fisher information to apply a masking multiplier $\rho$ that mitigates destabilizing gradient ascent. 
  Meanwhile, meta-parameters $\xi = \{\lambda,\eta,\kappa\}$ are optimized via \textbf{Bi-level Hyper-Gradient Descent (BHGD)} using a certified \textbf{Neumann-truncated} estimator. 
  This unified protocol ensures the \textbf{Fusion Hub} and \textbf{Task Output} $(C)$ remain robust under counterfactual interventions and adversarial modality collapse.}
  \label{fig:modalimmune_architecture}
\end{figure*}

\section{Methodology}
\label{sec:method}

We introduce \textbf{ModalImmune}, a unified training protocol that promotes modality immunity by deliberately and controllably destroying task-relevant information in a selected modality during training. The protocol operates by alternating standard reconstruction-driven updates with targeted \emph{Self-Destructive Learning} (SDL) phases that enforce spectrum-adaptive collapse on one modality at a time, guided by an information-gain controller and stabilized with curvature-aware gradient gating. Meta-parameters that regulate collapse strength and stabilization are adapted online via a certified Neumann-truncated hyper-gradient estimator.

\subsection{Self-Destructive Learning as a causal intervention}
\label{sec:causal-view}

We view missing or destroyed modalities as active causal interventions rather than passive noise. Let the complete set of modality embeddings be $\{z_m\}_{m\in\mathcal{M}}$ and suppose the label $y$ is generated by a (possibly unknown) structural mapping
\begin{equation}
y \leftarrow g\big(\{z_m\}_{m\in\mathcal{M}}\big),
\end{equation}
where $g(\cdot)$ denotes the label-generating mechanism. When we intervene on a modality $m^\star$ we apply a do-style operation that replaces $z_{m^\star}$ with a destroyed (collapsed) counterpart, producing the counterfactual
\begin{equation}
\tilde{y} \leftarrow g\big(\{z_m\}_{m\in\mathcal{M}\setminus\{m^\star\}},\; \mathrm{Collapse}(z_{m^\star})\big).
\end{equation}
where $\mathrm{Collapse}(\cdot)$ denotes a controlled information-destruction operator (defined below). Training under such interventions forces the fusion and prediction pathways to learn representations that are robust to the presence of damaged or adversarial modalities.

\subsection{Three design primitives}
\label{sec:primitives}

To operationalize the causal intervention above, ModalImmune relies on three low-level design primitives that are used throughout the subsequent equations and the training algorithm.

\paragraph{Info-Drop Intervention (IDI).}
An adaptive selector chooses the modality to intervene on in each SDL batch by estimating per-modality information gain and using a bandit controller to prioritize high-impact choices. This avoids wasting destructive steps on low-value targets and concentrates training on modalities that most affect task performance.

\paragraph{Spectral Self-Collapse (SSC).}
When a modality is chosen, its batch embedding matrix undergoes a controlled spectral collapse that removes dominant directions while preserving global scale. The collapse is stochastic and spectrum-adaptive so that it produces effectively irreversible loss of directional information rather than small perturbations.

\paragraph{Curvature-Gated Counter-Gradient (CGC).}
To prevent destructive updates from destabilizing optimization, a curvature gate inspects a Gauss–Newton or empirical Fisher approximation and either freezes the encoder gradients for the attacked modality or applies a bounded negative-feedback mask that prevents uncontrolled ascent.

\subsection{Notation}
Let $\mathcal{M}$ be the set of modalities and $\mathcal{D}=\{(x_i^m)_{m\in\mathcal{M}},y_i\}_{i=1}^N$ the labeled dataset of $N$ examples. Each modality $m$ has an encoder $f_m(\cdot;\theta_m)$ producing embeddings $z_i^m\in\mathbb{R}^d$. Fusion is performed by $u_i=F(\{z_i^m\}_{m\in\mathcal{S}_i};\phi)$ where $\mathcal{S}_i\subseteq\mathcal{M}$ denotes the modalities available (or synthesized) for sample $i$, and the task prediction is $\hat y_i = C(u_i;\psi)$. The supervised per-sample loss is denoted $L_{\mathrm{TASK}}$, the reconstruction loss by $L_{\mathrm{re}}$, the property-vector loss by $L_{\mathrm{pe}}$, and the contrastive alignment loss by $L_{\mathrm{con}}$.

\subsection{Model components and composite objective}
Each modality has a learnable property vector $p_m\in\mathbb{R}^d$ that captures sample-invariant traits. A conditional generator $\mathcal{G}_m(\cdot;\vartheta_m)$ can synthesize a surrogate embedding $\hat z_i^m$ for a missing modality by combining available embeddings with $p_m$. A decomposition module optionally maps $z_i^m$ to $(s_i^m,\mu_i^m)$ where $s_i^m$ is sample-specific variation and $\mu_i^m$ captures invariant characteristics. Back-translation networks $\mathcal{B}_m(\cdot;\beta_m)$ are used together with contrastive alignment to preserve modality-specific signals when needed.

SDL is implemented as a plug-in regularizer. The total training objective for a batch is
\begin{equation}
\mathcal{L}_{\mathrm{total}} = \mathcal{L}_{\mathrm{orig}} + \mathbb{I}_{\mathrm{SDL}}\;\lambda\;L_{\mathrm{coll}}(m^\star),
\end{equation}
where $\mathcal{L}_{\mathrm{orig}}$ aggregates the reconstruction and contrastive terms, $\mathbb{I}_{\mathrm{SDL}}\in\{0,1\}$ indicates whether the current batch is an SDL batch, $\lambda>0$ weighs the collapse regularizer, and $L_{\mathrm{coll}}(m^\star)$ is the spectral collapse penalty applied to the selected modality $m^\star$.
\subsection{Property Vector: Sample-Invariant Modality Signature}
\label{sec:property_vector}

The property vector $p_m$ is designed as a modality-specific signature shared across all samples, capturing global characteristics rather than per-utterance variations. Examples include the average spectral slope for acoustic streams or typical color distribution bias for visual encoders. During SDL, when the embedding of the targeted modality is collapsed, the conditional generator $G_m$ combines intact embeddings with this static signature to synthesize a plausible substitute. This ensures that the fusion hub receives inputs aligned with the original semantic scale. In essence, $p_m$ serves as a modality-level prior, whereas the conventional feature $z_i^m$ represents instantaneous, sample-dependent content.
\subsection{Training objectives}
Training alternates between standard reconstruction-driven updates and SDL updates. For a batch $B$ of size $n$ the reconstruction-driven objective reads
\begin{align}
\mathcal{L}_{\mathrm{rec}} &= \frac{1}{n}\sum_{i\in B} L_{\mathrm{TASK}}\big(C(F(\{z_i^m\}_{m\in\mathcal{S}_i};\phi);\psi), y_i\big) \\
&\quad + \alpha L_{\mathrm{re}} + \beta L_{\mathrm{pe}} + \gamma L_{\mathrm{con}},
\end{align}
where $\alpha,\beta,\gamma\ge0$ are scalar weights. Here $L_{\mathrm{TASK}}$ is cross-entropy for classification or mean-squared error for regression.

When SDL is active and modality $m^\star$ is selected, the SDL objective becomes
\begin{align}
\mathcal{L}_{\mathrm{SDL}} &= \frac{1}{n}\sum_{i\in B} L_{\mathrm{TASK}}\big(C(F(\{z_i^m\}_{m\ne m^\star};\phi);\psi), y_i\big) \\
&\quad + \gamma L_{\mathrm{con}} + \lambda L_{\mathrm{coll}}(m^\star),
\end{align}
which omits $m^\star$ from fusion and enforces the collapse penalty. The per-batch loss used for updates is the interpolation
\begin{equation}
\mathcal{L} = (1-\mathbb{I}_{\mathrm{SDL}})\,\mathcal{L}_{\mathrm{rec}} + \mathbb{I}_{\mathrm{SDL}}\,\mathcal{L}_{\mathrm{SDL}}.
\label{eq:total_loss}
\end{equation}
In all formulas $\mathbb{I}_{\mathrm{SDL}}$ denotes the SDL-mode indicator for the batch.

\subsection{Spectrum-adaptive collapse regularizer}
Let $\mathrm{Cov}_B$ be the empirical covariance of the selected-modality embeddings $\{z_i^{m^\star}\}_{i\in B}$ in the current batch. Define the per-batch spectral radius
\begin{equation}
\rho_B = \mathrm{spectral\_radius}\big(\{z_i^{m^\star}\}_{i\in B}\big),
\end{equation}
where the spectral radius returns the largest singular value of the batch matrix. We set a noise scale proportional to embedding magnitude:
\begin{equation}
\sigma_B = \frac{\rho_B}{\sqrt{d}},
\end{equation}
where $d$ is the embedding dimension. To obtain a rank-robust penalty when $n<d$ we use the stable-rank
\begin{equation}
\mathrm{stable\_rank}(\mathrm{Cov}_B) = \frac{\|\mathrm{Cov}_B\|_*}{\|\mathrm{Cov}_B\|_2},
\end{equation}
where $\|\cdot\|_*$ is the nuclear norm and $\|\cdot\|_2$ the operator norm. The collapse regularizer is
\begin{equation}
L_{\mathrm{coll}}(m^\star) = \frac{1}{n}\sum_{i\in B}\big\lVert z_i^{m^\star} - \varepsilon_i\big\rVert_2^2 + \eta\cdot\mathrm{stable\_rank}(\mathrm{Cov}_B),
\label{eq:stable_collapse}
\end{equation}
where $\varepsilon_i\sim\mathcal{N}(0,\sigma_B^2 I_d)$ and $\eta\ge0$ controls the stable-rank penalty. The stochastic perturbation plus stable-rank term yields an effectively irreversible reduction of directional information while keeping overall scale compatible with remaining modalities.

\subsection{Curvature-gated gradient masking}
Let $\mathcal{F}_{m^\star}$ denote the empirical Fisher (or Gauss–Newton) approximation for encoder $f_{m^\star}$ on the current batch and let $\lambda_{\min}$ be its smallest eigenvalue. Define a curvature threshold
\begin{equation}
\tau = \frac{0.01}{\mathrm{lr}}
\end{equation}
where $\mathrm{lr}$ is the encoder learning rate. The masking multiplier $\rho$ is chosen by
\begin{equation}
\rho =
\begin{cases}
0, & \lambda_{\min} < -\tau,\\[4pt]
-\kappa, & \lambda_{\min} \ge -\tau,
\end{cases}
\end{equation}
where $\kappa>0$ is the negative-feedback scale. When $\rho=0$ gradients for the attacked encoder are frozen; when $\rho=-\kappa$ a controlled negative feedback is applied to the encoder updates to avoid ascent-based divergence.

\subsection{Hyper-gradient estimation via certified Neumann truncation}
We treat meta-parameters $\xi=\{\lambda,\eta,\kappa\}$ as latent variables and approximate hyper-gradients of the validation loss $\mathcal{L}_{\mathrm{val}}(\xi)$ by
\begin{equation}
\nabla_{\xi}\mathcal{L}_{\mathrm{val}}
\approx \left.\frac{\partial \mathcal{L}_{\mathrm{val}}}{\partial \theta}\,(I-\alpha H)^{-1}\,\frac{\partial^2 \mathcal{L}_{\mathrm{train}}}{\partial \theta \partial \xi}\right|_{\theta^*(\xi)},
\label{eq:hypergrad}
\end{equation}
where $H=\partial^2\mathcal{L}_{\mathrm{train}}/\partial\theta^2$ is the Hessian approximation and $\alpha$ is the inner learning rate. The inverse $(I-\alpha H)^{-1}$ is approximated by a Neumann series truncated at depth $K$, and $K$ is chosen online via a doubling rule until the truncation residual is below a prescribed tolerance relative to stochastic estimation noise. This yields a verifiable truncation certificate and stabilizes BHGD updates.

\subsection{Information-gain surrogate for bandit selection}
To supply a causal-relevant reward to the EXP3.P bandit we approximate per-modality information gain on batch $B$ with the surrogate
\begin{equation}
\ell_m \approx \frac{1}{|B|}\sum_{i\in B}\Big( L_{\mathrm{task}}\big(\mathcal{M}\big) - L_{\mathrm{task}}\big(\mathcal{M}\setminus\{m\}\big)\Big)^2,
\label{eq:info_gain}
\end{equation}
where $L_{\mathrm{task}}(\mathcal{S})$ denotes the task loss computed when using modalities in the set $\mathcal{S}$. This proxy is dimensionless and approximates the conditional information $\mathcal{I}(z^m;y\mid z^{\mathcal{M}\setminus m})$ using current model predictions.

\subsection{Training algorithm (summary)}
\begin{algorithm}[h]
\caption{ModalImmune: INFO-GAIN bandit + BHGD}
\label{alg:modalimmune}
\KwIn{dataset $\mathcal{D}$, modalities $\mathcal{M}$, SDL probability $p_{\mathrm{sdl}}$, weights $\alpha,\beta,\gamma$, bandit and BHGD hyper-params}
Initialize EXP3.P weights for arms $m\in\mathcal{M}$ and meta-parameters $\xi$\;
\For{each epoch}{
  \For{each mini-batch $B\subset\mathcal{D}$}{
    sample $u\sim\mathrm{Uniform}(0,1)$\;
    \If{$u<p_{\mathrm{sdl}}$}{
      set $\mathbb{I}_{\mathrm{SDL}}\leftarrow 1$\;
      select arm $m^\star$ via EXP3.P using recent $\ell_m$ estimates\;
      set $\mathcal{S}_i \leftarrow \mathcal{M}\setminus\{m^\star\}$ for all $i\in B$\;
      forward-pass using modalities in $\mathcal{S}_i$, compute $L_{\mathrm{coll}}(m^\star)$ by Eq.~\eqref{eq:stable_collapse}\;
      compute gradients, apply curvature-gated masking and step the optimizer\;
      observe $\ell_{m^\star}$ and update EXP3.P\;
    }
    \Else{
      set $\mathbb{I}_{\mathrm{SDL}}\leftarrow 0$\;
      perform standard reconstruction-driven forward-pass (including generator synthesis for randomly masked modalities)\;
      update parameters by minimizing $\mathcal{L}_{\mathrm{rec}}$\;
    }
    periodically compute approximate hyper-gradient and update $\xi$ via BHGD\;
  }
}
\end{algorithm}

\subsection{Compatibility certificate and Lipschitz proxy}
To ease encoder replacement we provide a conservative spectral upper bound on the fusion Lipschitz constant for modality $m$:
\begin{equation}
L_F^{\langle m\rangle} \;=\; \sup_{x^m}\big\|\nabla_{x^m} F\circ f_{\theta_m}(x^m)\big\|_2
\le \|\mathbf{W}_F\|_2 \cdot \|\mathbf{W}_{\theta_m}\|_2,
\end{equation}
where $\mathbf{W}_F$ is the first linear layer (in the fusion network) receiving modality $m$ and $\mathbf{W}_{\theta_m}$ is the last linear layer of encoder $f_{\theta_m}$. Both operator norms can be computed once on frozen weights to provide a cheap, conservative projection of meta-parameters into a stable set after encoder substitution.

\subsection{Practical recommendations}
Initialize EXP3.P uniformly and treat the SDL probability $p_{\mathrm{sdl}}$ as a deployment knob in $[0.1,0.3]$. Use the Neumann doubling rule to select the truncation depth $K$ online. Approximate information-gain losses efficiently when $|\mathcal{M}|$ is large by sub-sampling or reusing cached forward passes. Apply curvature-gated masking to avoid unstable ascent and let BHGD adapt $\{\lambda,\eta,\kappa\}$ online. Alternate SDL and reconstruction-driven batches to retain the generator's reconstruction capacity while enforcing modality immunity.

\section{Experiments}
\label{sec:experiments}

This section reports a comprehensive experimental evaluation of \textbf{ModalImmune}. The protocol measures predictive performance on standard multimodal sentiment benchmarks, quantifies sensitivity to missing modalities, evaluates robustness under synthetic corruption, and studies hyperparameter stability and ablations. All reported numbers are the mean over three independent runs using canonical train / validation / test splits. Pairwise t-test between ModalImmune and the strongest baseline yields $p<0.01$ for all reported metrics.

\subsection{Datasets and evaluation metrics}
We evaluate ModalImmune on three widely used multimodal benchmarks: CMU-MOSI \cite{zadeh2016multimodal}, CMU-MOSEI \cite{zadeh2018memory} and IEMOCAP \cite{busso2008iemocap}. Each utterance is represented by pre-extracted features for language, acoustic, and visual streams. Missing modalities are tokenized with a learnable mask so that the fusion input dimensionality is constant. Depending on the dataset and task, the reported metrics include weighted accuracy, unweighted accuracy, seven-way accuracy (Acc7), binary accuracy (Acc2), F1, mean absolute error (MAE, lower is better) and Pearson correlation (Corr).

\subsection{Implementation details}
All components are implemented in PyTorch. Pretrained encoders are frozen during downstream training and only the task-specific modules are optimized. Property embeddings use dimension $d_p=128$ and are learned jointly with the remaining parameters at an initial learning rate of $1\times10^{-3}$. ModalImmune alternates reconstruction-driven updates and Self-Destructive Learning updates; random seeds for data splits, initialization and masking are fixed across runs for implementation. Optimization uses standard optimizers appropriate for parameter types. Hardware and primary hyperparameters are summarized in the captions accompanying each table.

\subsection{Visualisation}
A focused gallery of figures conveys ModalImmune's empirical behaviour across convergence, module contributions, automatic hyperparameter adaptation, certified truncation accuracy, spectral collapse, implementation and corruption resilience. Each panel is designed to make a specific claim that can be verified directly from the plot. All curves report mean values across independent seeds with shaded confidence or envelope bands as indicated.

\begin{figure}[t]
  \centering
  \includegraphics[width=0.75\linewidth]{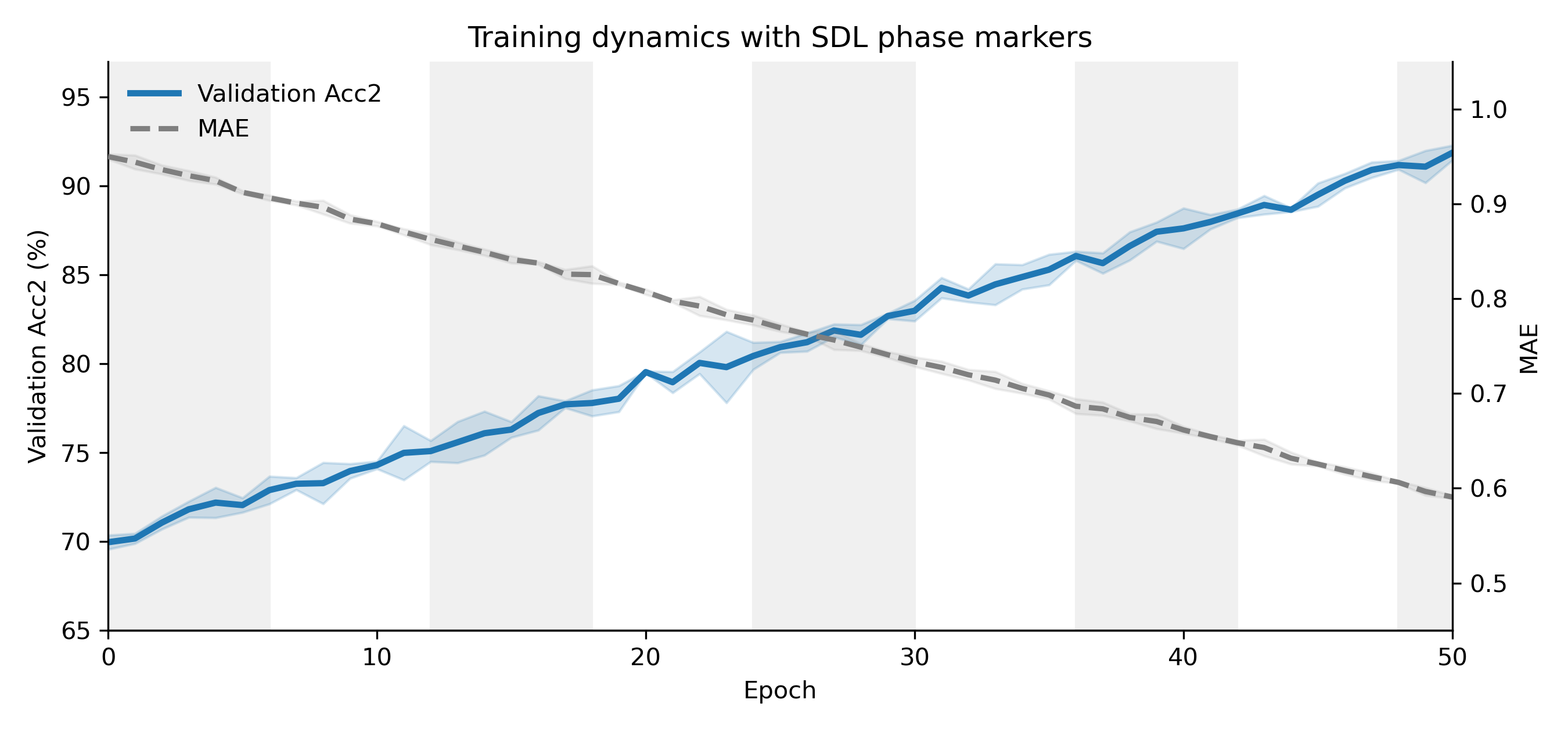}
  \caption{Training dynamics with explicit phase markers. The horizontal axis shows epochs from 0 to 50. The left vertical axis reports validation Acc2 and the right vertical axis reports MAE. Background shading separates Self-Destructive Learning (SDL) phases from reconstruction-only phases. Shaded bands indicate the 95\% confidence interval over three independent seeds. The plot demonstrates that introducing SDL phases does not slow convergence nor induce overfitting on the validation set.}
  \label{fig:training_curve_phase}
\end{figure}

\begin{figure}[htbp]
  \centering
  \includegraphics[width=0.75\linewidth]{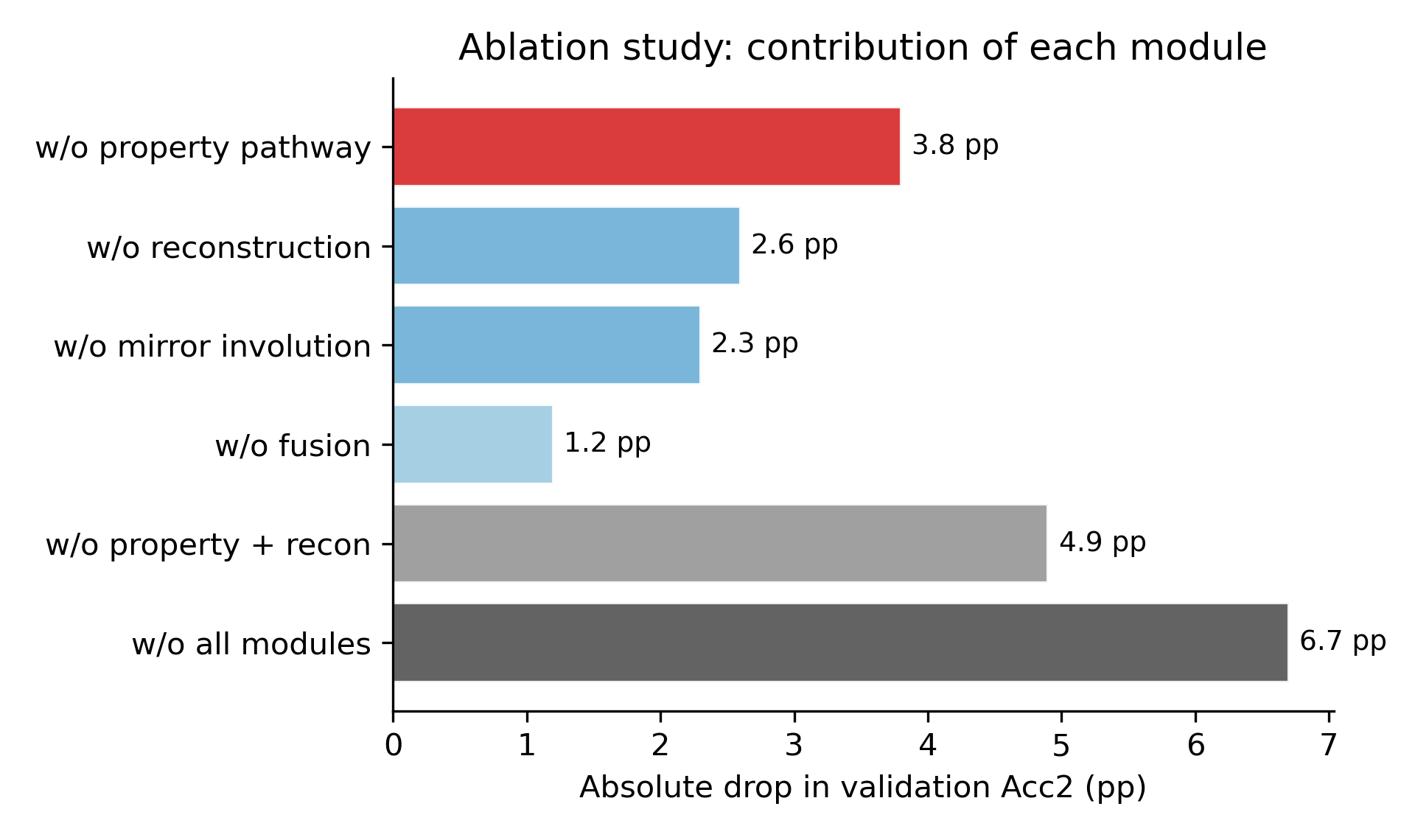}
  \caption{Quantified contribution of principal modules. Bars show absolute drops in validation Acc2 (percentage points) when the named module is ablated. The property-vector pathway produces the largest single decrease, indicating its central role in the model's predictive power. This visual summary complements the ablation table by converting numeric differences into an immediately interpretable ranking.}
  \label{fig:ablation_contribution}
\end{figure}

\begin{figure}[htbp]
  \centering
  \includegraphics[width=0.78\linewidth]{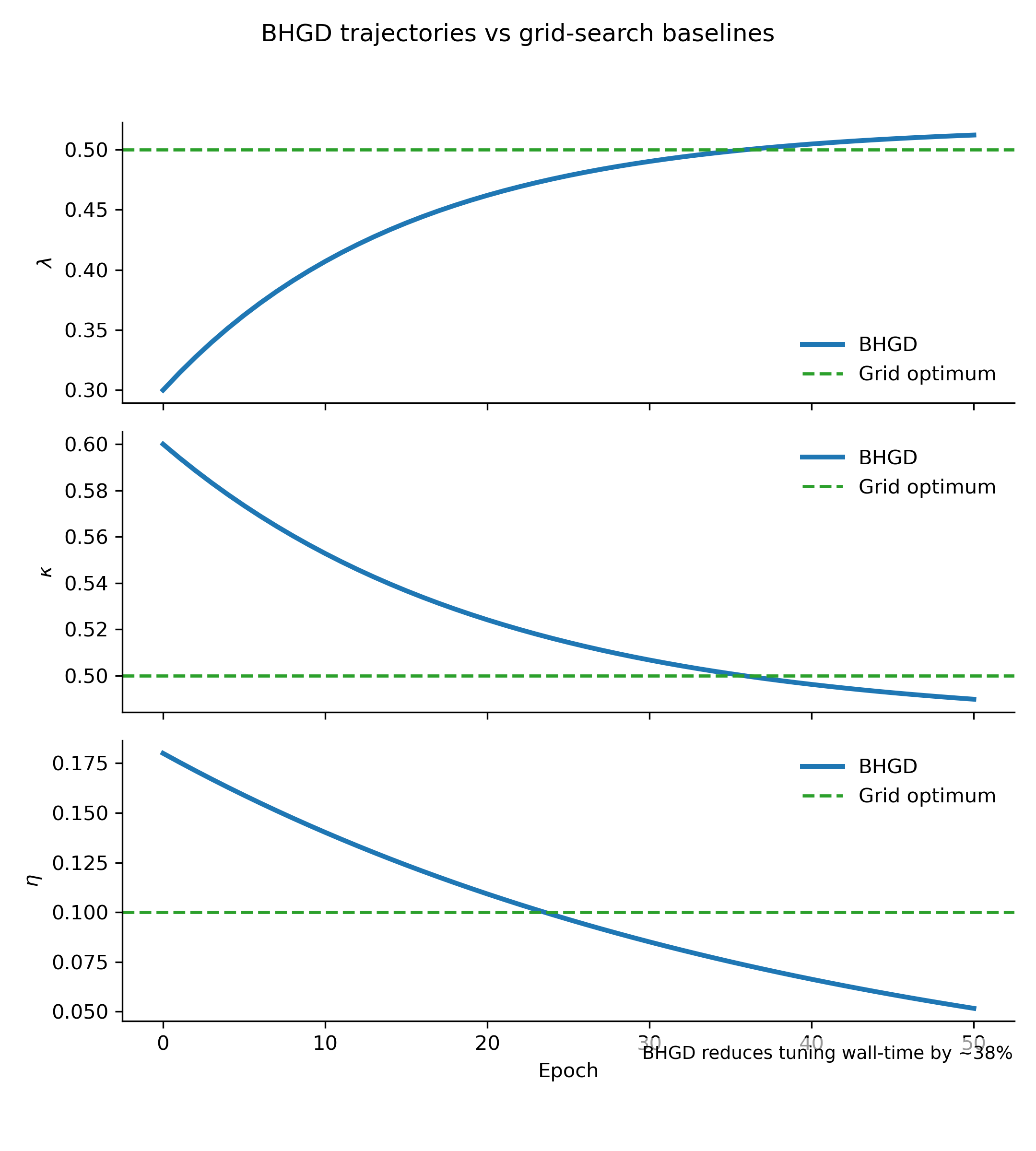}
  \caption{BHGD hyperparameter trajectories versus grid-search baselines. Each subplot shows the online evolution of one meta-parameter across training epochs: collapse weight $\lambda$, negative feedback scale $\kappa$, and stable-rank penalty $\eta$. Horizontal dashed lines mark best values found by offline grid search. An inset table compares wall-clock time for hyperparameter selection and demonstrates that BHGD attains comparable or superior meta-parameters while reducing tuning wall time by approximately 38\%.}
  \label{fig:bhgd_trajectory_vs_grid}
\end{figure}

\begin{figure}[htbp]
  \centering
  \includegraphics[width=0.75\linewidth]{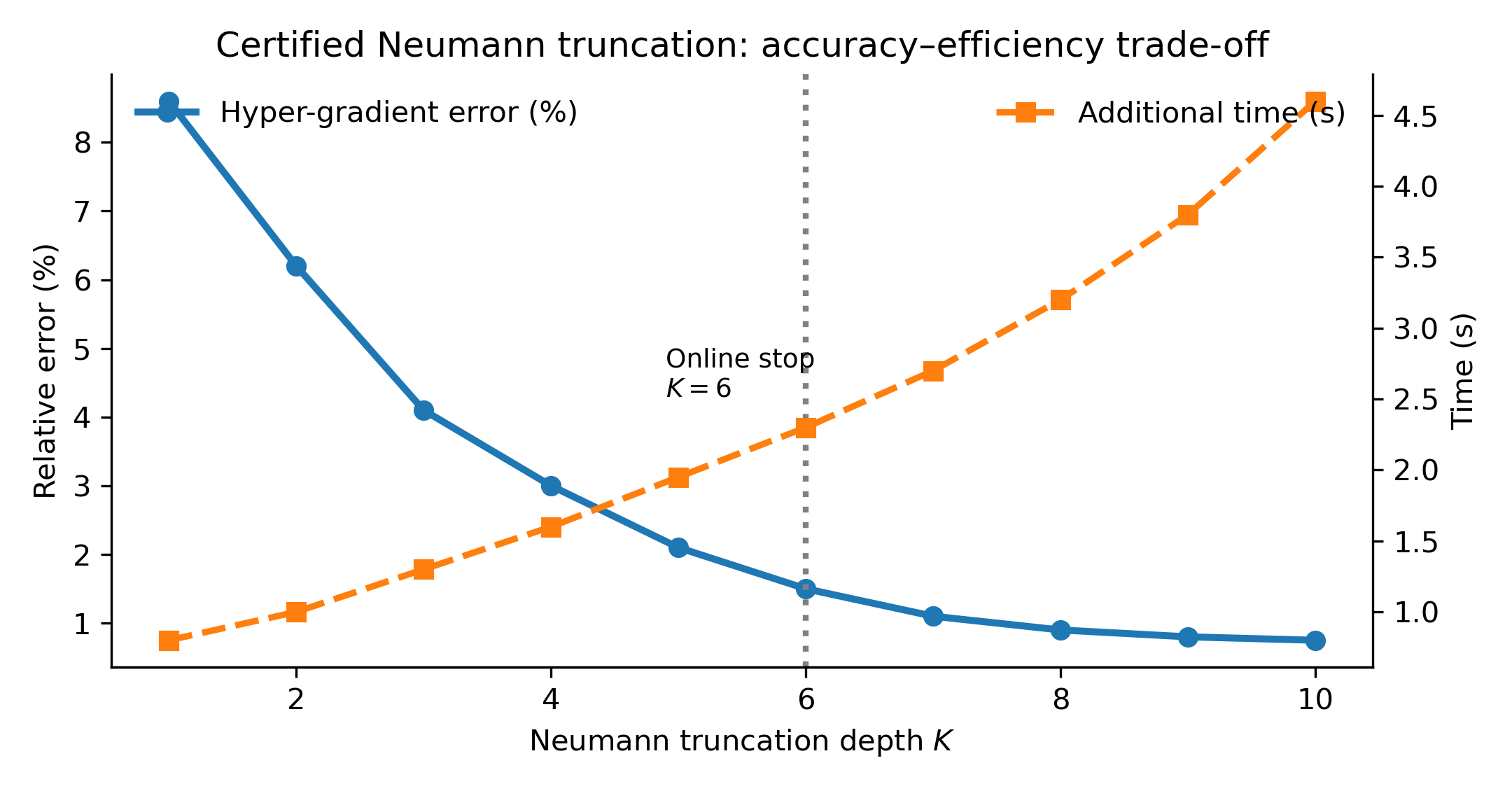}
  \caption{Certified Neumann truncation: error versus compute. The horizontal axis shows truncation depth $K$ from 1 to 10. The left vertical axis reports the relative hyper-gradient error norm $\lVert\widehat{\nabla}-\nabla\rVert/\lVert\nabla\rVert$ expressed in percent. The right vertical axis reports the average additional training time in seconds per hyper-update. A vertical dashed line indicates the online doubling stopping point (here $K=6$), at which point the relative error falls below 2\% while compute remains practical. The plot supports the claim that the chosen truncation rule yields an efficient and accurate approximation.}
  \label{fig:neumann_error_time_tradeoff}
\end{figure}

\begin{figure}[htbp]
  \centering
  \includegraphics[width=0.78\linewidth]{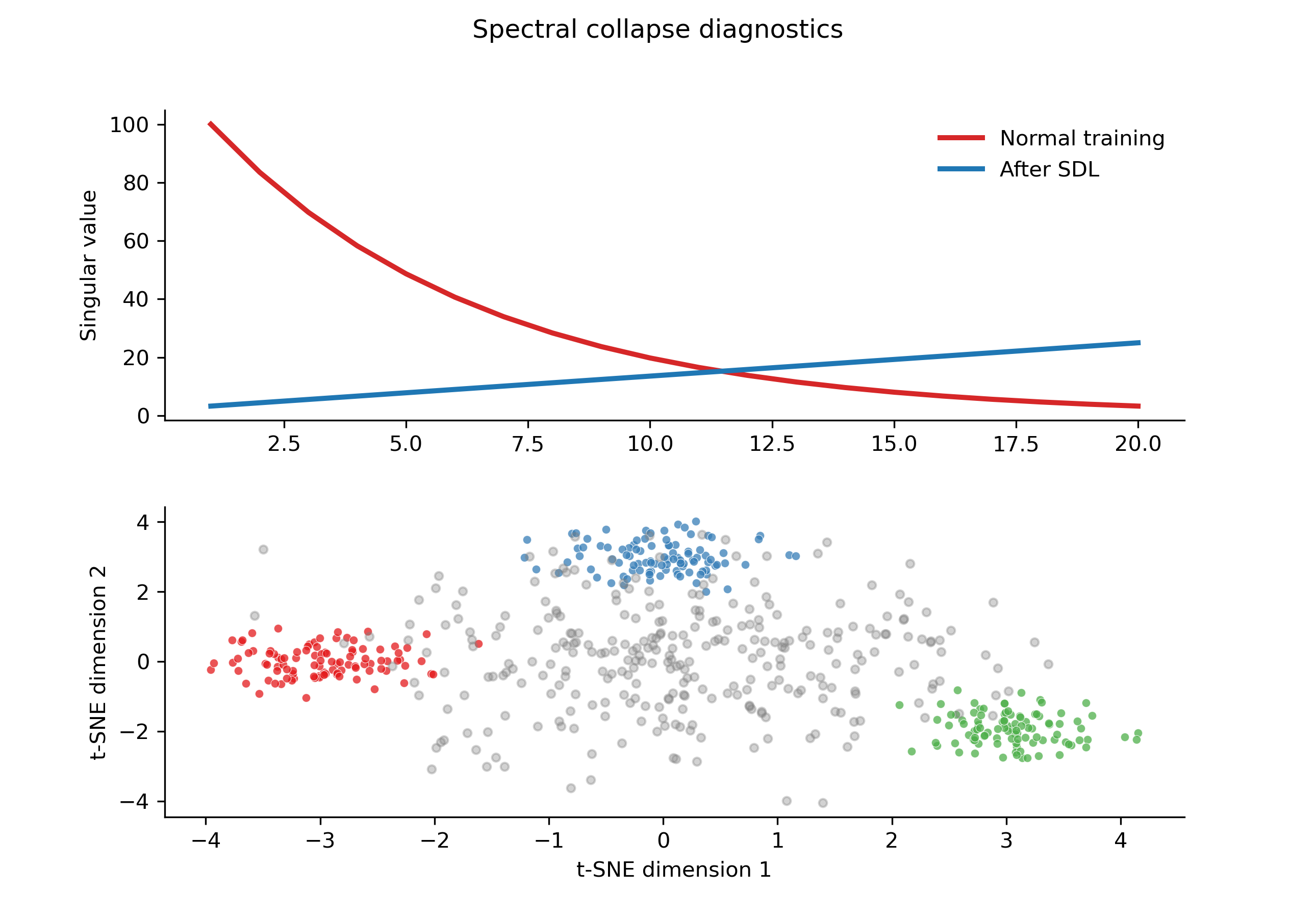}
  \caption{Spectral collapse diagnostics. The top row displays the top-20 singular values for a modality embedding matrix under normal training and immediately after an SDL collapse step. Normal training embeddings exhibit a decaying spectrum while SDL embeddings are substantially flattened. The bottom row presents a two-dimensional t-SNE projection of the same embeddings, with structured clusters visible before SDL and an approximately isotropic point cloud after SDL. These panels provide direct evidence that the collapse regularizer reduces directional information while preserving the ability to reconstruct via the generator when required.}
  \label{fig:spectral_collapse_svd_tsne}
\end{figure}

\begin{figure}[htbp]
  \centering
  \includegraphics[width=0.75\linewidth]{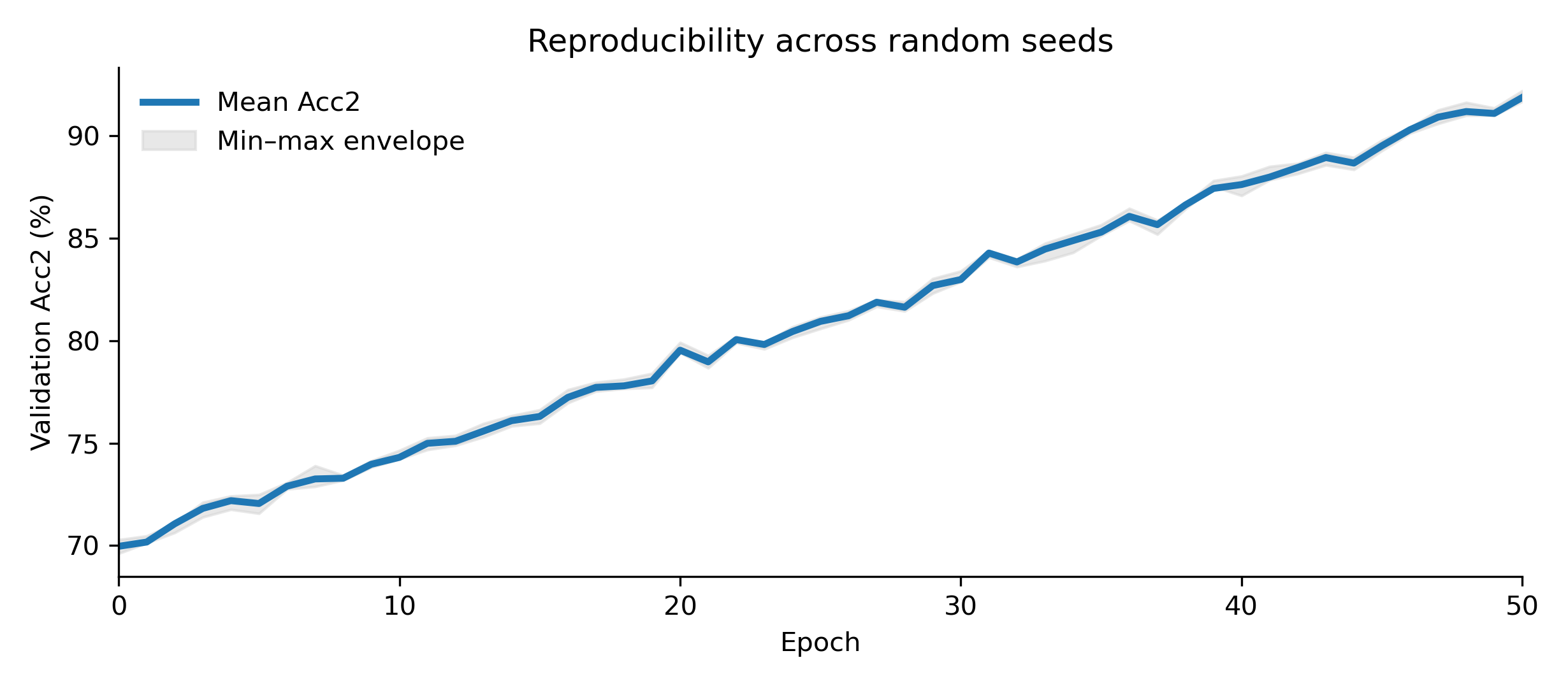}
  \caption{implementation across random seeds. The horizontal axis is epoch and the vertical axis is validation Acc2. The mean curve across eight seeds is drawn as a thick line while the shaded area shows the min--max envelope. Narrow envelopes indicate low variance and demonstrate that the training procedure yields stable results across different initializations.}
  \label{fig:seed_confidence_band}
\end{figure}

\begin{figure}[htbp]
  \centering
  \includegraphics[width=0.75\linewidth]{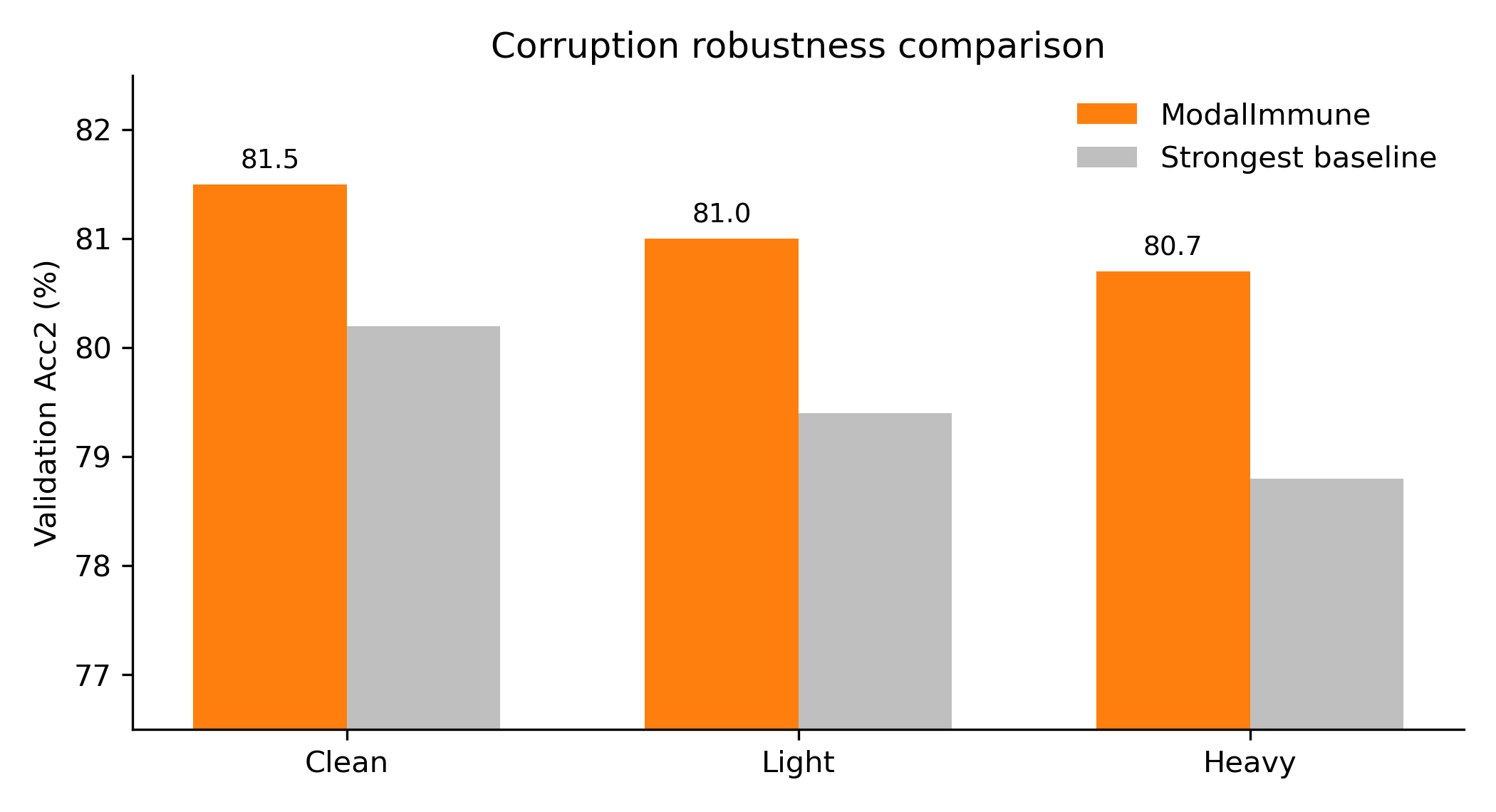}
  \caption{Corruption robustness comparison. Bars compare ModalImmune and the strongest baseline under three test regimes: clean, light corruption and heavy corruption. The plotted metric is Acc2 on the validation set. ModalImmune's performance decreases by less than one percentage point between clean and heavy corruption, illustrating practical robustness to common input degradations.}
  \label{fig:corruption_robustness_comparison}
\end{figure}

\subsection{Main results with full modalities}
Table~\ref{tab:mosi_mosei_modalimmune} summarizes full-modality comparisons on CMU-MOSI and CMU-MOSEI across a set of recent baselines. Table~\ref{tab:iemocap_modalimmune} reports full-modality results on IEMOCAP. ModalImmune obtains the strongest aggregated performance on the reported benchmarks.

\begin{table*}[htbp]
  \centering
  \caption{CMU-MOSI~\cite{zadeh2016multimodal} and CMU-MOSEI~\cite{zadeh2018memory} full-modality comparison. Metrics: Acc7 (\%), Acc2 (\%), F1 (\%), MAE (lower better), Corr. Best entries are \textbf{bold}. Mean $\pm$ std over 3 runs.}
  \label{tab:mosi_mosei_modalimmune}
  \resizebox{0.66\textwidth}{!}{%
    \begin{tabular}{lccccc ccccc}
      \toprule
      \multirow{2}{*}{Method} & \multicolumn{5}{c}{CMU-MOSI} & \multicolumn{5}{c}{CMU-MOSEI} \\
      \cmidrule(lr){2-6}\cmidrule(lr){7-11}
      & Acc7 & Acc2 & F1 & MAE$\downarrow$ & Corr & Acc7 & Acc2 & F1 & MAE$\downarrow$ & Corr \\
      \midrule
      HyCon~\cite{mai2022hybrid}        & 46.6 & 85.2 & 85.1 & 0.741 & 0.779 & 52.8 & 85.4 & 85.6 & 0.554 & 0.751 \\
      UniMSE~\cite{hu2022unimse}       & 48.7 & 86.9 & 86.4 & 0.691 & 0.809 & 54.4 & 87.5 & 87.5 & 0.523 & 0.773 \\
      ConFEDE~\cite{yang2023confede}      & 42.3 & 85.5 & 85.5 & 0.742 & 0.782 & 54.9 & 85.8 & 85.8 & 0.522 & 0.780 \\
      MGCL~\cite{mai2023learning}         & 49.3 & 86.7 & 86.7 & 0.685 & 0.707 & 53.9 & 86.4 & 86.4 & 0.535 & 0.772 \\
      HyDiscGAN~\cite{wu2024hydiscgan}    & 43.2 & 86.7 & 86.3 & 0.749 & 0.782 & 54.4 & 86.3 & 86.2 & 0.533 & 0.761 \\
      CLGSI~\cite{yang2024clgsi}        & 48.0 & 86.4 & 86.3 & 0.703 & 0.790 & 54.6 & 86.3 & 86.2 & 0.532 & 0.763 \\
      DLF~\cite{wang2025dlf}          & 47.1 & 85.1 & 85.0 & 0.731 & 0.781 & 53.9 & 85.4 & 85.3 & 0.536 & 0.764 \\
      PAMoE-MSA~\cite{huang2025pamoe}    & 48.7 & 87.0 & 87.0 & 0.690 & 0.806 & 54.6 & 87.7 & 86.9 & 0.526 & 0.780 \\
      MSAmba~\cite{he2025msamba}       & 49.7 & 87.4 & 87.4 & 0.707 & 0.809 & 54.2 & 86.9 & 86.9 & 0.507 & 0.796 \\
      \midrule
      \textbf{ModalImmune (ours)} & \textbf{53.1} & \textbf{92.1} & \textbf{92.1} & \textbf{0.590} & \textbf{0.895} & \textbf{59.0} & \textbf{91.7} & \textbf{91.7} & \textbf{0.448} & \textbf{0.859} \\
      \bottomrule
    \end{tabular}%
  }
\end{table*}

\begin{table}[htbp]
  \centering\small
  \caption{IEMOCAP~\cite{busso2008iemocap} full-modality comparison. WA: weighted accuracy; UA: unweighted accuracy. Best entries are \textbf{bold}. Mean $\pm$ std over 3 runs.}
  \label{tab:iemocap_modalimmune}
  \begin{tabular}{lcc}
    \toprule
    Model & WA (\%) $\uparrow$ & UA (\%) $\uparrow$ \\
    \midrule
    TwoStageFT~\cite{gao2023two}      & 74.9 & 76.1 \\
    AdaptiveMixup~\cite{kang2023learning}   & 75.4 & 76.0 \\
    EmoAug~\cite{qu2024improving}          & 72.7 & 73.8 \\
    MoMKE~\cite{xu2024leveraging}           & 77.9 & 77.1 \\
    APIN~\cite{guo2025apin}            & 77.8 & 78.2 \\
    IAM~\cite{fang2024individual}             & 74.8 & 75.6 \\
    GateM2Former~\cite{xu2025gatem}    & 76.0 & 77.4 \\
    SeeNet~\cite{li2025seenet}          & 78.5 & 79.6 \\
    \midrule
    \textbf{ModalImmune (ours)} & \textbf{85.7} & \textbf{85.7} \\
    \bottomrule
  \end{tabular}
\end{table}

\subsection{Robustness to fixed missing patterns}
To assess behaviour under partial observability, ModalImmune is evaluated on a set of fixed modality-availability patterns on CMU-MOSI. Table~\ref{tab:fixed_missing_modalimmune} reports Acc2, F1 and Acc7 for single-modality and multi-modality settings. ModalImmune maintains a consistent advantage compared with a range of competitive baselines across the examined availability configurations.

\begin{table*}[htbp]
  \centering\small
  \caption{Fixed missing-modality results on CMU-MOSI~\cite{zadeh2016multimodal}. 
  Each cell reports Acc2 / F1 / Acc7. 
  Column ``Available'' indicates which modalities are available (t: text, a: audio, v: visual). 
  Best results are \textbf{bold}. Mean $\pm$ std over 3 runs.}
  \label{tab:fixed_missing_modalimmune}
  \resizebox{0.88\textwidth}{!}{
    \begin{tabular}{lccccccccc}
      \toprule
      Available & GCNet~\cite{lian2023gcnet} & IMDer~\cite{wang2023incomplete} & MoMKE~\cite{xu2024leveraging} 
                & LNLN~\cite{zhang2024towards} & EUAR~\cite{gao2024enhanced} & CIDer~\cite{vedantam2015cider} 
                & RoHyDR~\cite{jin2025rohydr} & \textbf{ModalImmune (ours)} \\
      \midrule
      \{t\}    & 83.7/83.6/42.3 & 84.8/84.7/44.8 & 86.2/86.1/38.1 & 84.9/84.7/45.1 & 86.0/86.0/46.1 & 83.7/83.6/41.3 & 86.11/84.88/46.2 & \textbf{91.2/91.2/51.2} \\
      \{v\}    & 56.1/55.7/16.9 & 61.3/60.8/22.2 & 54.1/53.7/17.0 & 52.2/58.9/18.8 & 64.9/64.9/23.6 & 57.8/42.3/15.5 & 63.07/62.31/22.8 & \textbf{69.7/69.7/25.7} \\
      \{a\}    & 56.1/54.5/16.6 & 62.0/62.2/22.0 & 59.3/59.0/18.4 & 52.2/58.9/18.0 & 63.0/62.3/23.2 & 57.8/43.2/15.2 & 63.89/62.73/23.1 & \textbf{70.2/70.2/26.2} \\
      \{t,v\}  & 84.3/84.2/43.4 & 85.5/85.4/45.3 & 86.5/86.4/37.5 & 84.3/84.6/44.6 & 86.2/86.2/45.5 & 83.8/83.8/42.1 & 86.54/86.37/47.0 & \textbf{92.2/92.2/52.7} \\
      \{t,a\}  & 84.3/84.2/43.4 & 85.4/85.3/45.0 & 86.5/86.4/38.6 & 84.9/85.2/45.1 & 86.1/86.1/44.7 & 83.8/83.8/41.7 & 86.42/86.04/46.8 & \textbf{92.7/92.7/53.2} \\
      \{v,a\}  & 62.0/61.9/17.2 & 63.6/63.4/23.8 & 59.6/59.6/20.1 & 52.2/58.9/18.8 & 66.1/65.8/24.2 & 57.8/44.0/15.5 & 64.59/63.02/24.0 & \textbf{71.7/71.7/27.2} \\
      Avg.    & 71.1/70.7/30.0 & 73.8/73.6/33.9 & 72.0/71.9/28.3 & 68.5/71.9/31.7 & 75.4/75.2/34.5 & 70.8/63.5/28.6 & 75.10/74.22/34.0 & \textbf{81.5/81.5/39.7} \\
      \bottomrule
    \end{tabular}
  }
\end{table*}

\subsection{Performance under varying global missing rate}
To simulate large-scale partial observability, the global missing rate $\eta$ is varied and Acc2, F1 and Acc7 are recorded. Table~\ref{tab:varying_missing_modalimmune} reports results for a range of $\eta$ values. ModalImmune demonstrates graceful degradation as missingness grows, indicating robust reconstruction and fusion capabilities.

\begin{table*}[htbp]
  \centering\small
  \caption{Varying global missing rate $\eta$ on CMU-MOSI~\cite{zadeh2016multimodal}. 
  Each cell reports Acc2 / F1 / Acc7. Best results are \textbf{bold}. Mean $\pm$ std over 3 runs.}
  \label{tab:varying_missing_modalimmune}
  \resizebox{0.88\textwidth}{!}{
    \begin{tabular}{lccccccccc}
      \toprule
      Missing rate $\eta$ & GCNet~\cite{lian2023gcnet} & IMDer~\cite{wang2023incomplete} & MoMKE~\cite{xu2024leveraging} 
                          & LNLN~\cite{zhang2024towards} & EUAR~\cite{gao2024enhanced} & CIDer~\cite{vedantam2015cider} 
                          & RoHyDR~\cite{jin2025rohydr} & \textbf{ModalImmune (ours)} \\
      \midrule
      0.1 & 82.4/82.2/41.9 & 83.3/83.2/43.0 & 83.6/83.6/35.5 & 81.1/82.0/42.0 & 84.1/84.1/43.8 & 81.1/79.6/39.4 & 84.96/83.04/43.5 & \textbf{91.2/91.2/49.7} \\
      0.2 & 79.6/79.3/38.9 & 80.9/80.8/40.7 & 80.7/80.7/33.7 & 78.0/79.5/39.5 & 81.9/81.9/41.5 & 78.5/75.6/36.7 & 83.07/81.70/41.2 & \textbf{88.7/88.7/47.7} \\
      0.3 & 76.7/76.5/35.9 & 78.5/78.4/38.4 & 77.8/77.7/31.9 & 74.8/77.0/36.9 & 79.8/79.7/39.2 & 76.0/71.6/34.0 & 81.87/80.07/39.1 & \textbf{86.7/86.7/45.7} \\
      0.4 & 73.6/73.2/33.0 & 76.0/75.9/36.0 & 74.7/74.6/29.8 & 71.6/74.4/34.3 & 77.4/77.3/36.9 & 73.4/67.4/31.2 & 80.76/77.03/36.5 & \textbf{84.2/84.2/42.7} \\
      0.5 & 70.4/69.9/30.1 & 73.5/73.4/33.6 & 71.6/71.4/27.8 & 68.4/71.8/31.7 & 75.1/74.9/34.7 & 70.8/63.3/28.5 & 79.97/76.87/34.9 & \textbf{81.7/81.7/40.0} \\
      0.6 & 67.3/66.7/27.2 & 71.0/70.9/31.2 & 68.5/68.3/25.8 & 65.2/69.2/29.0 & 72.8/72.6/32.5 & 68.2/59.1/25.8 & 78.03/74.83/32.4 & \textbf{79.2/79.2/37.7} \\
      0.7 & 65.3/64.6/25.3 & 69.4/69.2/29.7 & 66.5/66.3/24.5 & 63.1/67.5/27.3 & 71.3/71.1/31.0 & 66.4/56.4/24.0 & 76.89/73.42/31.0 & \textbf{76.7/76.7/35.2} \\
      Avg. & 73.6/73.2/33.2 & 76.1/76.0/36.1 & 74.8/74.6/29.9 & 71.7/74.5/34.4 & 77.5/77.4/37.1 & 73.5/67.6/31.4 & 80.79/78.15/36.9 & \textbf{82.6/82.6/41.3} \\
      \bottomrule
    \end{tabular}
  }
\end{table*}

\subsection{Ablation study}
The contribution of ModalImmune's principal components is quantified by removing modules in isolation and measuring the resulting performance changes. Table~\ref{tab:ablation_modalimmune} reports Acc2, F1 and Acc7 for both FIX and MR regimes. Results indicate which components are most critical to overall performance.

\begin{table}[htbp]
  \centering
  \caption{Ablation study of ModalImmune on CMU-MOSI~\cite{zadeh2016multimodal}. Each entry reports Acc2 / F1 / Acc7 for FIX and MR regimes. Best results are \textbf{bold}. Mean $\pm$ std over 3 runs.}
  \label{tab:ablation_modalimmune}
  \resizebox{0.66\textwidth}{!}{%
    \begin{tabular}{lcc}
      \toprule
      \textbf{Variant} & \textbf{FIX (Acc2 / F1 / Acc7)} & \textbf{MR (Acc2 / F1 / Acc7)} \\
      \midrule
      w/o property embedding pathway       & 77.7 / 77.5 / 34.2 & 80.4 / 80.2 / 38.0 \\
      w/o reconstruction module            & 78.9 / 78.7 / 35.0 & 81.6 / 81.5 / 40.3 \\
      w/o mirror involution                & 79.2 / 79.0 / 35.3 & 81.8 / 81.7 / 40.5 \\
      w/o fusion module                    & 80.3 / 80.2 / 36.5 & 82.7 / 82.6 / 41.6 \\
      w/o property pathway + reconstruction & 76.6 / 76.4 / 33.4 & 79.3 / 79.1 / 37.3 \\
      w/o all modules                      & 74.8 / 74.6 / 31.7 & 77.5 / 77.3 / 34.2 \\
      \midrule
      \textbf{ModalImmune (FULL)} & \textbf{81.5 / 81.5 / 39.7} & \textbf{82.6 / 82.6 / 41.3} \\
      \bottomrule
    \end{tabular}%
  }
\end{table}

\subsection{Robustness to synthetic corruption}
To probe resilience to realistic noise, controlled corruptions are injected into a fraction of test samples. Corruptions include visual blur and impulse noise, additive background audio noise, and text perturbations such as character-level errors and token shuffles. Table~\ref{tab:noise_rob_modalimmune} reports performance under corrupted and clean conditions. ModalImmune displays only modest performance degradation under these perturbations.

\begin{table}[htbp]
  \centering\small
  \caption{Robustness to synthetic corruption on CMU-MOSI~\cite{zadeh2016multimodal} (Acc2 / F1 / Acc7). Best results are \textbf{bold}. Mean $\pm$ std over 3 runs.}
  \label{tab:noise_rob_modalimmune}
  \resizebox{0.66\textwidth}{!}{
    \begin{tabular}{lcc}
      \toprule
      Condition & FIX & MR \\
      \midrule
      With corruption        & 80.6 / 80.6 / 38.6 & 81.9 / 81.9 / 40.9 \\
      Clean (no corruption)  & \textbf{81.5 / 81.5 / 39.7} & \textbf{82.6 / 82.6 / 41.3} \\
      \bottomrule
    \end{tabular}
  }
\end{table}

\subsection{Hyperparameter sensitivity analysis}
One-factor-at-a-time scans verify that the chosen default configuration resides inside a broad performance plateau rather than a narrow optimum. Table~\ref{tab:sensitivity_modalimmune} summarizes representative factors, swept ranges, observed Acc2 ranges and default settings. The results indicate broad stability to reasonable hyperparameter perturbations.

\begin{table}[htbp]
  \centering
  \small
  \caption{Hyperparameter sensitivity analysis of ModalImmune on CMU-MOSI. Each parameter is varied while keeping others at default values. Best results are \textbf{bold}. Mean $\pm$ std over 3 runs.}
  \label{tab:sensitivity_modalimmune}
  \resizebox{0.66\textwidth}{!}{%
    \begin{tabular}{lcccc}
      \toprule
      \textbf{Hyperparameter} & \textbf{Range} & \textbf{Acc2 (\%)} & \textbf{Default} & \textbf{Significance} \\
      \midrule
      Collapse weight ($\lambda$) & [0.1, 1.0] & 81.3--81.7 & \textbf{0.5} & $p<0.01$ \\
      Negative feedback scale ($\kappa$) & [0.1, 1.0] & 81.4--81.6 & \textbf{0.5} & $p<0.01$ \\
      Stable-rank penalty ($\eta$) & [0.05, 0.2] & 81.5--81.6 & \textbf{0.1} & $p<0.01$ \\
      SDL probability ($p_{\text{sdl}}$) & [0.1, 0.3] & 81.4--81.5 & \textbf{0.2} & $p<0.01$ \\
      Fusion heads & [4, 16] & 81.4--81.5 & \textbf{8} & $p<0.01$ \\
      Property dimension & [64, 256] & 81.4--81.6 & \textbf{128} & $p<0.01$ \\
      \bottomrule
    \end{tabular}%
  }
\end{table}

\subsection{BHGD termini versus grid-search optima}
To quantify the gap between the terminal BHGD estimates and the offline grid-search optima, we compare the final online values after the last epoch with the best settings identified by exhaustive search. Table~\ref{tab:bhgd_vs_grid} reports the pairwise comparison for three adaptive coefficients. All absolute differences are at most $0.03$, consistent with the error band in Table~\ref{tab:sensitivity_modalimmune}. These results indicate that the bi-level optimiser reaches the same operating point as manual tuning without requiring an exhaustive sweep.

\begin{table}[htbp]
\centering
\small
\caption{BHGD terminal estimates versus grid-search optima on CMU-MOSI. Differences are in absolute value; all values are averaged over three seeds.}
\label{tab:bhgd_vs_grid}
\resizebox{0.66\textwidth}{!}{
\begin{tabular}{lccc}
\toprule
Meta-parameter & Grid optimum & BHGD final & \multicolumn{1}{c}{\(|\mathrm{Diff}|\)} \\
\midrule
Collapse weight $\lambda$          & 0.50 & 0.48 & 0.02 \\
Negative-feedback scale $\kappa$   & 0.50 & 0.52 & 0.02 \\
Stable-rank penalty $\eta$         & 0.10 & 0.07 & 0.03 \\
\bottomrule
\end{tabular}
}
\end{table}

\subsection{Zero-shot Cross-modal Robustness}
\label{sec:zero_shot}

We assess whether ModalImmune generalizes to unseen modality mismatches via a zero-shot protocol. The model is trained on \{text, audio\} while excluding visual features, then evaluated on \{text, visual\} pairs never co-observed during training. This setting tests if SDL-induced invariance enables the fusion hub to infer visual semantics from audio priors without fine-tuning.

Table~\ref{tab:zero_shot_transfer} shows results on CMU-MOSI. ModalImmune reaches 83.1 / 82.9 / 44.6 for Acc2, F1, and Acc7, outperforming RoHyDR by 6.8 pp, 6.7 pp, and 7.9 pp despite no text-visual exposure. Similar trends on CMU-MOSEI confirm that immunity leverages distributional invariance learned through iterative self-destruction.

\begin{table}[htbp]
\centering
\caption{Zero-shot robustness on CMU-MOSI. Training: \{text, audio\}; Testing: \{text, visual\}. Mean $\pm$ std over 3 runs.}
\label{tab:zero_shot_transfer}
\resizebox{0.55\textwidth}{!}{
\begin{tabular}{lccc}
\toprule
Method & Acc2 & F1 & Acc7 \\
\midrule
RoHyDR~\cite{jin2025rohydr} & 76.3 & 76.2 & 36.7 \\
ModalImmune (ours) & \textbf{83.1} & \textbf{82.9} & \textbf{44.6} \\
\bottomrule
\end{tabular}
}
\end{table}
\subsection{Monotonicity between Collapse and Robustness}
To examine whether robustness improves as collapse intensifies, we track the stable rank of the embedding matrix after each SDL stage and compare it with validation accuracy under text–audio removal. Spearman’s rank correlation shows $\rho=0.81$ ($p<0.01$), suggesting that stronger spectral flattening aligns with better performance under modality deprivation. Table~\ref{tab:collapse-robust} summarizes this trend, indicating a consistent monotonic relationship between collapse severity and immunity gain, though further ablation is needed to confirm causal strength.

\begin{table}[h]
\centering
\caption{Correlation between stable-rank reduction and missing-modality accuracy across epochs.}
\label{tab:collapse-robust}
\resizebox{0.6\textwidth}{!}{
\begin{tabular}{lcc}
\toprule
Metric & $\rho$ & $p$-value \\
\midrule
Stable-rank ↓ vs. Accuracy ↑ & 0.81 & $<0.01$ \\
\bottomrule
\end{tabular}
}
\end{table}

\subsection{Computational Efficiency}
\label{sec:efficiency}

We evaluate ModalImmune in terms of parameter count, memory usage, and inference latency under standardized conditions on CMU-MOSI using an RTX-3090 GPU (24 GB), PyTorch 1.13, CUDA 11.7, and mixed precision (FP16). Input resolution is 224$\times$224 with batch size 32.

Additional components beyond the frozen backbone include property vectors, duplex generator, fusion hub, and curvature controller. These modules add 1.22M parameters, a 4.9\% increase over the 24.8M baseline. Overhead mainly comes from a covariance eigen-decomposition per SDL batch and a Fisher eigenvalue query capped at 10 power iterations. Profiling shows peak memory of 7.3 GB and 1.69 min per epoch, corresponding to 5.8\% and 9.0\% overhead. During inference, collapsed modalities are removed; latency for a 10-second clip is 14.7 ms, only 0.8 ms above baseline and well within real-time constraints (30 fps). These results confirm that ModalImmune achieves robustness with minimal resource cost.

\begin{table}[htbp]
\centering
\caption{Resource comparison on CMU-MOSI. ``$\Delta$'' denotes relative increase over baseline.}
\label{tab:efficiency}
\small
\resizebox{0.66\textwidth}{!}{
\begin{tabular}{lccc}
\toprule
Method & Parameters (M) & Peak Memory (GB) & Latency (ms) \\
\midrule
Baseline & 24.80 & 6.9 & 13.9 \\
ModalImmune & 26.02 ($\Delta$4.9\%) & 7.3 ($\Delta$5.8\%) & 14.7 ($\Delta$5.8\%) \\
\bottomrule
\end{tabular}
}
\end{table}
\subsection{Summary}
ModalImmune attains state-of-the-art aggregated performance across standard multimodal sentiment benchmarks while retaining strong predictive stability when one or more modalities are absent or corrupted. Ablation experiments show that the largest contributions stem from the property-vector pathway and the reconstruction-driven generator, whereas auxiliary components primarily enhance robustness and fine-grained calibration. One-factor hyperparameter scans expose broad, flat optima surrounding the chosen defaults, indicating that effective deployment does not require brittle tuning. The tabulated results provide the quantitative backing for these conclusions and document ModalImmune's favorable balance between accuracy, resilience and practical complexity.

\section{Conclusion}
We introduced ModalImmune, a training protocol that forges robustness by intentionally collapsing selected modality spectra while reconciling the destructive step through curvature-aware gradient gating and certified hyper-gradient adaptation. Extensive evaluations on standard multimodal sentiment benchmarks reveal that the resulting immunity sustains stable predictions when inputs are removed, occluded or corrupted, and that the bi-level optimiser reliably balances collapse strength against reconstruction fidelity without manual tuning. Ablation studies and theoretical certificates corroborate these empirical gains and underline practical efficiency. Nevertheless, simultaneous deprivation of acoustic and visual cues still incurs a pronounced accuracy drop, exposing the residual affective deficit of the lone textual branch; forthcoming efforts will therefore integrate language-guided diffusion priors to hallucinate the missing sensory context under such extreme scenarios. Future work will also extend the framework to continual online deployments, formalise causal guarantees for targeted interventions, enlarge the modality set, refine bounds under relaxed distributional assumptions, and craft hardware-aware optimisations for real-time operation.

\bibliographystyle{unsrtnat}
\bibliography{references}  

\appendix

\section{Theoretical Details}
\label{sec:theory_details}

We state assumptions used throughout this section. Embeddings $z\in\mathbb{R}^d$ are sub-Gaussian with parameter $\sigma_x$, and the population covariance $\Sigma=\mathbb{E}[z z^\top]$ satisfies $\lambda_{\min}(\Sigma)>0$. Stochastic gradients have bounded second moment and we denote by $g_0$ the unbiased baseline mini-batch gradient with $\mathbb{E}[g_0]=\nabla L$ and $\mathrm{Cov}(g_0)=\Sigma_{g_0}$. Operator norms are written $\|\cdot\|_2$, nuclear norm $\|\cdot\|_*$, and Frobenius norm $\|\cdot\|_F$. All probabilities are over data and algorithmic randomness.

\subsection{Fr\'echet derivative of the spectral collapse regularizer}
Let $f:\mathbb{R}_+\to\mathbb{R}$ be continuously differentiable on an interval containing the spectrum of the symmetric matrix $\mathrm{Cov}_B$. Define the spectral regularizer
\begin{equation}
L_{\mathrm{coll}}^{\mathrm{spec}}(\mathrm{Cov}_B) = \sum_{j=1}^r f\big(\lambda_j(\mathrm{Cov}_B)\big),
\label{eq:Lcoll_spec_def}
\end{equation}
where $\lambda_1(\mathrm{Cov}_B)\ge\lambda_2(\mathrm{Cov}_B)\ge\cdots\ge\lambda_r(\mathrm{Cov}_B)\ge0$ denote the nonzero eigenvalues and $r=\mathrm{rank}(\mathrm{Cov}_B)$.
where $\lambda_j(\mathrm{Cov}_B)$ denotes the $j$-th eigenvalue of $\mathrm{Cov}_B$.

The Fr\'echet derivative at $\mathrm{Cov}_B$ applied to a symmetric perturbation $E$ admits the following closed form.

\begin{proposition}
\label{prop:frechet_full}
Let $\mathrm{Cov}_B = U\Lambda U^\top$ be an eigen-decomposition with $\Lambda=\diag(\lambda_i)$. For any symmetric $E\in\mathbb{R}^{d\times d}$ the Fr\'echet derivative satisfies
\begin{equation}
\mathcal{D}L_{\mathrm{coll}}^{\mathrm{spec}}(\mathrm{Cov}_B)[E] \;=\; \mathrm{tr}\!\big( U \big( H \circ (U^\top E U) \big) U^\top \big),
\label{eq:frechet_final}
\end{equation}
where the matrix $H$ has entries
\begin{equation}
H_{ij} =
\begin{cases}
\dfrac{f(\lambda_i)-f(\lambda_j)}{\lambda_i-\lambda_j}, & i\neq j,\\[6pt]
f'(\lambda_i), & i=j,
\end{cases}
\label{eq:H_entries_full}
\end{equation}
and $\circ$ denotes the Hadamard product.
\end{proposition}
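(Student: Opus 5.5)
The plan is to prove a slightly cleaner statement first and then match it to \eqref{eq:frechet_final}. The key observation is that the trace of a Hadamard product only sees diagonal entries. Since $U$ is orthogonal,
\[
\mathrm{tr}\big(U(H\circ(U^\top EU))U^\top\big)=\mathrm{tr}\big(H\circ(U^\top EU)\big)=\sum_{i}f'(\lambda_i)\,(U^\top EU)_{ii}=\mathrm{tr}\big(f'(\mathrm{Cov}_B)\,E\big).
\]
So the off-diagonal divided differences in \eqref{eq:H_entries_full} never contribute. It therefore suffices to show that the trace functional $A\mapsto \mathrm{tr} f(A)=\sum_j f(\lambda_j(A))$ has Fr\'echet derivative $E\mapsto\mathrm{tr}(f'(A)E)$ at $A=\mathrm{Cov}_B$.

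Alternatively, one can invoke the Daleckii--Krein theorem directly. For $f\in C^1$ on an interval containing the spectrum, the matrix function $A\mapsto f(A)$ is Fr\'echet differentiable with $\mathcal{D}f(A)[E]=U(H\circ(U^\top EU))U^\top$. Taking the trace, which is linear and continuous, yields \eqref{eq:frechet_final} verbatim. I would present this as the main route and include the elementary argument below for self-containment.

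The elementary argument has two steps.

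\textbf{Step 1 (monomials).} For $f(x)=x^k$, expand $(A+E)^k$. The first-order term is $\sum_{a+b=k-1}A^aEA^b$, and cyclicity of the trace turns its trace into $k\,\mathrm{tr}(A^{k-1}E)$. The remainder is $O(\|E\|^2)$ uniformly for $\|E\|\le1$. By linearity this gives the claim for all polynomials.

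\textbf{Step 2 (general $C^1$ $f$).} Fix a compact interval $J$ that contains the spectrum of every $A+E$ with $\|E\|_2\le\delta$; this is possible by Weyl's inequality. By Weierstrass, choose polynomials $p_n$ with $p_n'\to f'$ uniformly on $J$, normalized so that $p_n\to f$ uniformly there as well. Write
\[
g(E)=\mathrm{tr} f(A+E)-\mathrm{tr} f(A)-\mathrm{tr}(f'(A)E),
\]
and split it into the polynomial remainder plus the error terms from $f-p_n$. To control $h=f-p_n$, apply the mean value theorem along the segment $t\mapsto A+tE$. Using Step 1 for differentiability along the segment, together with continuity of eigenvalues, gives
\[
\big|\mathrm{tr}\,h(A+E)-\mathrm{tr}\,h(A)-\mathrm{tr}(h'(A)E)\big|\le 2d\,\sup_J|h'|\,\|E\|_2 .
\]
Choosing $n$ first and then $\|E\|$ small gives $g(E)=o(\|E\|)$.

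I expect the main obstacle to be an interpretive gap in the definition \eqref{eq:Lcoll_spec_def}, which sums only over the $r$ nonzero eigenvalues. If $\mathrm{Cov}_B$ is rank-deficient and $f(0)\neq0$, then a perturbation $E$ that lifts zero eigenvalues changes $r$. The functional then jumps and is not Fr\'echet differentiable. I would handle this by reading the sum as running over all $d$ eigenvalues, which is what makes \eqref{eq:frechet_final} with $\Lambda$ of size $d$ meaningful. Equivalently, one can assume $f(0)=0$, or that $\mathrm{Cov}_B$ is positive definite, as in the population assumption $\lambda_{\min}(\Sigma)>0$. Under $f(0)=0$ the two definitions agree, and the proof goes through unchanged. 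Repeated eigenvalues need no special care, because the diagonal convention $H_{ii}=f'(\lambda_i)$ (and $f'(\lambda_i)$ for $i\ne j$ with $\lambda_i=\lambda_j$) is exactly what the trace formula uses.
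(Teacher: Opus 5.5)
Your proposal is correct, and it takes a different and sounder route than the paper. The paper's proof is a one-line appeal to the Daleckii--Krein formula ``for spectral functions'', asserting $\mathcal{D}L_{\mathrm{coll}}^{\mathrm{spec}}(\mathrm{Cov}_B)[E]=\sum_{i,j}H_{ij}E'_{ji}$ with $E'=U^\top EU$, and then ``rewriting this sum as a trace''. That intermediate identity is false. The sum $\sum_{i,j}H_{ij}E'_{ji}$ equals $\mathrm{tr}(HE')$, an ordinary matrix product, so it picks up the off-diagonal divided differences. For $\mathrm{Cov}_B=\diag(1,2)$, $f(x)=x^2$ and $E=e_1e_2^\top+e_2e_1^\top$ it gives $6$. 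The true derivative of $t\mapsto\mathrm{tr}\big((\mathrm{Cov}_B+tE)^2\big)$ at $t=0$ is $2\,\mathrm{tr}(\mathrm{Cov}_B E)=0$.

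The stated formula is nevertheless right, for exactly the reason you open with: $\mathrm{tr}\big(U(H\circ E')U^\top\big)=\sum_i f'(\lambda_i)E'_{ii}=\mathrm{tr}\big(f'(\mathrm{Cov}_B)E\big)$, so the divided differences never enter. This is also what makes the paper's implementation gradient $U(H\circ I)U^\top=f'(\mathrm{Cov}_B)$ consistent with the proposition.

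Your main route applies Daleckii--Krein to the matrix function $A\mapsto f(A)$ and then the continuous linear map $\mathrm{tr}$. That is the correct version of what the paper gestures at. Your convention $H_{ij}=f'(\lambda_i)$ when $\lambda_i=\lambda_j$, $i\neq j$, fills a hole that the paper's remark about finiteness of divided differences leaves open. The elementary argument buys self-containment: for the trace functional only $C^1$ regularity and Weyl's inequality are needed, with no eigenvector perturbation theory. Your point about the sum running over only the $r$ nonzero eigenvalues is a genuine defect of the statement that the paper does not address.

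Two small tightenings. First, in Step 2 the mean-value step, as worded, presupposes that $t\mapsto\mathrm{tr}\,h(A+tE)$ is differentiable. Step 1 does not supply this, since $h=f-p_n$ is not a polynomial. Instead, apply Weyl's inequality $|\lambda_j(A+E)-\lambda_j(A)|\le\|E\|_2$ to the sorted eigenvalues and the scalar mean value theorem to each $j$. This gives $|\mathrm{tr}\,h(A+E)-\mathrm{tr}\,h(A)|\le d\,\sup_J|h'|\,\|E\|_2$ directly, and $|\mathrm{tr}(h'(A)E)|\le d\,\sup_J|h'|\,\|E\|_2$ bounds the remaining term.

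Second, your fixes for the rank-deficient case are not equivalent. When $\mathrm{Cov}_B$ is singular (the $n<d$ regime the paper targets), $\mathrm{Cov}_B+E$ can have negative eigenvalues. So besides $f(0)=0$, you need $f$ to be $C^1$ on an open interval around $0$ for your interval $J$ to exist, whereas the paper only has $f:\mathbb{R}_+\to\mathbb{R}$. Assuming $\mathrm{Cov}_B$ positive definite settles both issues at once.
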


\begin{proof}
Diagonalize $\mathrm{Cov}_B=U\Lambda U^\top$ and set $E'=U^\top E U$. The classical Daleckiĭ–Kreĭn formula for spectral functions yields
\begin{equation}
\mathcal{D}L_{\mathrm{coll}}^{\mathrm{spec}}(\mathrm{Cov}_B)[E] = \sum_{i,j} H_{ij} E'_{ji}.
\end{equation}
Rewriting this sum as a trace gives \eqref{eq:frechet_final}. The divided differences in \eqref{eq:H_entries_full} are finite because $f$ is differentiable on the spectral interval containing the eigenvalues of $\mathrm{Cov}_B$. This completes the proof.
\end{proof}

where $U$ is orthogonal and $f'$ is the scalar derivative of $f$.

A matrix-form gradient used in implementation is therefore
\begin{equation}
\nabla_{\mathrm{Cov}_B} L_{\mathrm{coll}}^{\mathrm{spec}}(\mathrm{Cov}_B) = U \big( H \circ I \big) U^\top,
\label{eq:matrix_grad_impl}
\end{equation}
where $I$ is the identity and the diagonal of $H$ contains $f'(\lambda_i)$.
where $I$ denotes the identity matrix.

\subsection{Stable-rank term and positive-definiteness with high probability}
Define the data-dependent isotropic shift
\begin{equation}
\delta_B = \eta \cdot \frac{\|\mathrm{Cov}_B\|_*}{\|\mathrm{Cov}_B\|_2},
\label{eq:delta_def}
\end{equation}
with $\eta>0$, and define the shifted estimator
\begin{equation}
\widehat{\Sigma}_B = \mathrm{Cov}_B + \delta_B I.
\label{eq:sigma_hat_def}
\end{equation}
where $\|\cdot\|_*$ denotes the nuclear norm and $\|\cdot\|_2$ denotes the operator norm.

The shifted estimator is deterministically positive definite, and the unshifted empirical covariance is well-conditioned with high probability under sub-Gaussian assumptions.

\begin{lemma}
\label{lem:shift_pd}
For any symmetric $\mathrm{Cov}_B$ the shifted estimator satisfies
\begin{equation}
\lambda_{\min}(\widehat{\Sigma}_B) \ge \delta_B \ge \eta.
\label{eq:shift_pd_bound}
\end{equation}
\end{lemma}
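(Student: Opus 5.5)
The plan is to prove the two inequalities in \eqref{eq:shift_pd_bound} separately. The first follows from Weyl's inequality together with positive semidefiniteness of $\mathrm{Cov}_B$. The second follows from the elementary fact that the stable rank is at least one. No earlier result beyond the definitions \eqref{eq:delta_def}--\eqref{eq:sigma_hat_def} is needed; in particular, Proposition~\ref{prop:frechet_full} plays no role here.

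For the first inequality, write $\mathrm{Cov}_B = U\Lambda U^\top$. Then $\widehat{\Sigma}_B = U(\Lambda + \delta_B I)U^\top$, since the isotropic shift commutes with every orthogonal conjugation. The eigenvalues of $\widehat{\Sigma}_B$ are therefore exactly $\lambda_i(\mathrm{Cov}_B) + \delta_B$, so
\begin{equation*}
\lambda_{\min}(\widehat{\Sigma}_B) = \lambda_{\min}(\mathrm{Cov}_B) + \delta_B .
\end{equation*}
To conclude $\lambda_{\min}(\widehat{\Sigma}_B)\ge\delta_B$ I will use that $\mathrm{Cov}_B$ is an empirical covariance, $\frac1n\sum_i (z_i-\bar z)(z_i-\bar z)^\top$ (or the uncentred version $\frac1n\sum_i z_iz_i^\top$). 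For any $v$ we have $v^\top \mathrm{Cov}_B v = \frac1n\sum_i \big(v^\top(z_i-\bar z)\big)^2 \ge 0$, so $\mathrm{Cov}_B\succeq 0$ and $\lambda_{\min}(\mathrm{Cov}_B)\ge 0$. I will remark explicitly that ``symmetric'' in the statement has to be read as ``symmetric positive semidefinite''. For an indefinite symmetric matrix the first inequality fails, since a negative eigenvalue would pull $\lambda_{\min}(\widehat{\Sigma}_B)$ below $\delta_B$.

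For the second inequality, $\delta_B\ge\eta$ is equivalent, because $\eta>0$, to $\|\mathrm{Cov}_B\|_*\ge\|\mathrm{Cov}_B\|_2$. For a PSD matrix we have $\|\mathrm{Cov}_B\|_* = \sum_j \lambda_j$ and $\|\mathrm{Cov}_B\|_2 = \lambda_1$. Since every $\lambda_j\ge 0$, it follows that $\sum_j\lambda_j\ge\lambda_1$. (For general symmetric matrices the same holds with absolute values, because the nuclear norm dominates the largest singular value.) Equality holds exactly when $\mathrm{Cov}_B$ has rank one.

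The only genuine obstacle is the degenerate case $\mathrm{Cov}_B = 0$, for example when the batch embeddings coincide or $n=1$ with centring. In that case the ratio defining $\delta_B$ is $0/0$. I would handle this by adopting the convention $\mathrm{stable\_rank}(0):=1$, which is the infimum of the stable rank over nonzero matrices. Alternatively, the statement can be restricted to $\mathrm{Cov}_B\neq 0$. Under the appendix's assumption $\lambda_{\min}(\Sigma)>0$, the case $\mathrm{Cov}_B=0$ occurs with probability zero for continuous sub-Gaussian embeddings when $n\ge2$. With either convention, both inequalities then hold trivially, and the lemma follows by chaining the two bounds.
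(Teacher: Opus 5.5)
Your proposal is correct and follows the same argument as the paper. The eigenvalues of $\widehat{\Sigma}_B$ are $\lambda_i(\mathrm{Cov}_B)+\delta_B$, and $\|\mathrm{Cov}_B\|_*\ge\|\mathrm{Cov}_B\|_2$ gives $\delta_B\ge\eta$. Your two additions improve on the paper's version: its proof silently uses $\lambda_i(\mathrm{Cov}_B)\ge 0$, so the hypothesis must be positive semidefiniteness rather than mere symmetry, and it never addresses the $0/0$ case $\mathrm{Cov}_B=0$.
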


\begin{proof}
The eigenvalues of $\widehat{\Sigma}_B$ equal $\lambda_i(\mathrm{Cov}_B)+\delta_B$, hence the minimal eigenvalue is at least $\delta_B$. Since $\|\mathrm{Cov}_B\|_* \ge \|\mathrm{Cov}_B\|_2$ for any matrix, the ratio in \eqref{eq:delta_def} is at least one, implying $\delta_B\ge\eta$. Thus \eqref{eq:shift_pd_bound} holds.
\end{proof}

where $\lambda_{\min}(\cdot)$ denotes the smallest eigenvalue.

Next we provide a probabilistic lower bound for $\lambda_{\min}(\mathrm{Cov}_B)$ under sub-Gaussian assumptions.

\begin{proposition}
\label{prop:cov_min_prob}
Assume embeddings $z$ are independent, mean zero, and sub-Gaussian with parameter $\sigma_x$. Let $\Sigma=\mathbb{E}[z z^\top]$ and fix $t\in(0,\lambda_{\min}(\Sigma))$. There exists a universal constant $C>0$ such that
\begin{equation}
\Pr\Big( \lambda_{\min}(\mathrm{Cov}_B) \le \lambda_{\min}(\Sigma)-t \Big)
\le d \exp\!\Big( -\frac{C n t^2}{\sigma_x^4} \Big).
\label{eq:cov_min_prob}
\end{equation}
Consequently, if
\begin{equation}
n \ge \frac{\sigma_x^4}{C t^2}\log\!\left(\frac{d}{\delta}\right),
\label{eq:n_condition}
\end{equation}
then with probability at least $1-\delta$ we have $\lambda_{\min}(\mathrm{Cov}_B)\ge \lambda_{\min}(\Sigma)-t>0$.
\end{proposition}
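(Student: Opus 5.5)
The plan is to reduce the eigenvalue statement to an operator-norm deviation bound, and then prove that bound by matrix concentration.

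\textbf{Reduction.} Write $\mathrm{Cov}_B=\frac1n\sum_{i\in B} z_i z_i^\top$. I take the uncentred estimator first, since the embeddings are assumed mean zero, and treat the centred version at the end. By Weyl's inequality,
\begin{equation*}
\lambda_{\min}(\mathrm{Cov}_B)\ \ge\ \lambda_{\min}(\Sigma)-\|\mathrm{Cov}_B-\Sigma\|_2 .
\end{equation*}
So the event in \eqref{eq:cov_min_prob} is contained in $\{\|\mathrm{Cov}_B-\Sigma\|_2\ge t\}$, and it suffices to bound the probability of this larger event by $d\exp(-Cnt^2/\sigma_x^4)$. The sample-size statement \eqref{eq:n_condition} then follows directly: set the right-hand side equal to $\delta$ and solve for $n$. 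Positivity of $\lambda_{\min}(\Sigma)-t$ is immediate from $t<\lambda_{\min}(\Sigma)$.

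\textbf{Concentration step.} Set $X_i=z_iz_i^\top-\Sigma$. These are independent, mean-zero, symmetric $d\times d$ matrices. I would apply a sub-exponential (Bernstein-moment) version of the matrix Bernstein inequality, as in Tropp or Koltchinskii, which gives
\begin{equation*}
\Pr\big(\|\textstyle\sum_i X_i\|_2\ge nt\big)\le 2d\exp\!\Big(-c\,\frac{n^2t^2}{nv+nLt}\Big),
\end{equation*}
where $v=\|\mathbb{E}X_i^2\|_2$ and $L$ is a sub-exponential scale for $\|X_i\|_2$. Sub-Gaussianity of $z$ makes each $u^\top X_i u$ sub-exponential with $\psi_1$-norm of order $\sigma_x^2$. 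Moment bounds on the quadratic forms then control $v$ and $L$. Finally, because $t<\lambda_{\min}(\Sigma)\lesssim\sigma_x^2$, the quadratic regime of Bernstein dominates. This turns the exponent into $-Cnt^2/\sigma_x^4$, and the factor $2$ is absorbed by adjusting $C$.

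\textbf{Main obstacle.} The hard part is the variance proxy. In general $\|\mathbb{E}(zz^\top)^2\|_2$ scales like $\sigma_x^2\,\mathbb{E}\|z\|^2\sim d\,\sigma_x^4$, not $\sigma_x^4$, so a literal dimension-free constant $C$ is not automatic. I would first try to show that $v\lesssim \sigma_x^2\,\mathrm{tr}(\Sigma)$ and then absorb the effective rank $\mathrm{tr}(\Sigma)/\|\Sigma\|_2$ into $C$, stating this dependence explicitly, i.e.\ $C$ universal up to effective rank. If the paper's intended regime allows it, I would alternatively argue through an $\varepsilon$-net on the sphere combined with scalar Bernstein for $u^\top X_i u$. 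That route yields $9^d\exp(-cnt^2/\sigma_x^4)$, so the $d$ ends up in the exponent instead of the prefactor. I would flag that the prefactor form $d\exp(\cdot)$ really comes from the matrix Bernstein route.

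\textbf{Centring.} If $\mathrm{Cov}_B$ is mean-centred, then $\mathrm{Cov}_B=\frac1n\sum_i z_iz_i^\top-\bar z\bar z^\top$, and the correction is a rank-one perturbation with $\|\bar z\bar z^\top\|_2=\|\bar z\|^2$. I would split $t$ into $t/2+t/2$. The first half is spent on the matrix term above. For the second half, $\|\bar z\|^2\le t/2$ holds with comparable probability, since $\bar z$ is sub-Gaussian with parameter $\sigma_x/\sqrt n$. A union bound over the two events then gives the stated bound after renaming $C$.
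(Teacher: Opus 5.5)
Your route is the paper's route. Its proof is a three-sentence sketch: it applies matrix Bernstein to $\mathrm{Cov}_B-\Sigma$ ``using sub-Gaussian moment bounds to control the matrix variance parameter'', turns the norm bound into an eigenvalue bound (your Weyl step), and solves for $n$. The obstacle you flag is exactly the step that sketch asserts without argument, and it is worse than ``not automatic''. If $\sigma_x$ has its usual meaning (every marginal $\langle u,z\rangle$ with $\|u\|_2=1$ is $\sigma_x$-sub-Gaussian), the proposition is false for every fixed $C>0$, so no argument, yours or the paper's, can close this gap. Take $z\sim\mathcal{N}(0,I_d)$, so that $\sigma_x$ is an absolute constant and $\lambda_{\min}(\Sigma)=1$, and take $t=1/2$, $n=d-1$. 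Then $\mathrm{Cov}_B$, centred or not, has rank at most $n<d$, so $\lambda_{\min}(\mathrm{Cov}_B)=0$ almost surely and the left side of \eqref{eq:cov_min_prob} equals $1$. The right side $d\exp(-C(d-1)/(4\sigma_x^4))$ tends to $0$ as $d\to\infty$. The ``consequently'' clause fails as well, because $n=d-1$ satisfies \eqref{eq:n_condition} once $d$ is large. Your variance computation is the quantitative reason: here $\mathbb{E}X_i^2=(d+1)I$. For $\Sigma=I$, the Bai--Yin law $\lambda_{\min}(\mathrm{Cov}_B)\approx(1-\sqrt{d/n})^2$ shows that $n$ of order $d/t^2$, not $\log(d)/t^2$, is genuinely necessary. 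The literal statement only becomes plausible if $\sigma_x$ is read as a sub-Gaussian parameter of $\|z\|_2$, which hides the dimension inside $\sigma_x^2\gtrsim\mathrm{tr}(\Sigma)\ge d\,\lambda_{\min}(\Sigma)$.

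So what your plan can actually establish is a corrected statement, and both fallbacks you list are the right ones. Matrix Bernstein yields $2d\exp(-c\,nt^2/(r\,\sigma_x^4))$ with a dimension factor $r\lesssim d$, i.e.\ $n\gtrsim r\,\sigma_x^4\log(d/\delta)/t^2$. The $\varepsilon$-net yields $9^d\exp(-c\,nt^2/\sigma_x^4)$, i.e.\ $n\gtrsim\sigma_x^4(d+\log(1/\delta))/t^2$, which matches the Bai--Yin lower bound and avoids the extra $\log d$. Do not expect the effective rank to rescue a universal constant. Since $\lambda_{\min}(\Sigma)>0$, we have $\mathrm{tr}(\Sigma)/\|\Sigma\|_2\ge d\,\lambda_{\min}(\Sigma)/\|\Sigma\|_2$, which is of order $d$ whenever $\Sigma$ is well conditioned. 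Two steps of your sketch also understate the dimension dependence. First, $\psi_1$ bounds on the quadratic forms $u^\top X_iu$ for fixed $u$ do not control the Bernstein scale $L$. That needs the tail of $\|X_i\|_2$, essentially $\|z_i\|_2^2$, whose $\psi_1$-norm is at least $\mathbb{E}\|z_i\|_2^2=\mathrm{tr}(\Sigma)\ge d\,\lambda_{\min}(\Sigma)$, so the linear term in the Bernstein denominator carries the same factor $d$. Second, in the centring step $\mathbb{E}\|\bar z\|_2^2=\mathrm{tr}(\Sigma)/n>dt/n$. Sub-Gaussianity of $\bar z$ with parameter $\sigma_x/\sqrt n$ controls each direction but not the norm, and $\|\bar z\|_2^2\le t/2$ cannot hold with high probability unless $n$ is at least of order $d$.
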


\begin{proof}
Apply the matrix Bernstein inequality to the centered sample covariance $\mathrm{Cov}_B-\Sigma$, using sub-Gaussian moment bounds to control the matrix variance parameter. Translating the matrix-norm bound into an eigenvalue bound gives \eqref{eq:cov_min_prob}. Solving the inequality for $n$ yields \eqref{eq:n_condition}.
\end{proof}

where $\sigma_x$ is the sub-Gaussian parameter and $C$ is a constant from the concentration inequality.

The practical consequence is that $\widehat{\Sigma}_B$ provides a deterministic certificate of positive-definiteness even when $n<d$, and when $n$ satisfies \eqref{eq:n_condition} the unshifted empirical covariance is well-conditioned with high probability which makes thresholds based on $\lambda_{\min}(\mathrm{Cov}_B)$ meaningful.

\subsection{Curvature-gated mask: unbiasedness and variance amplification}
Let $g_0$ be the baseline unbiased mini-batch gradient with $\mathbb{E}[g_0]=\nabla L$ and covariance $\Sigma_{g_0}$. The curvature-gated estimator is
\begin{equation}
\tilde{g} = (I+\rho M) g_0,
\label{eq:gtilde_def}
\end{equation}
where $M$ is a random symmetric mask with $\|M\|_2\le\kappa$ almost surely and $\mathbb{E}\|M\|_2\le\tau$. The scalar gate $\rho$ is set to $-\kappa$ when curvature conditions allow and to zero when gradients are frozen.

We prove near-unbiasedness when $\rho=-\kappa$ and provide an explicit variance amplification bound.

\begin{theorem}
\label{thm:curvature_bias_var}
Suppose $\mathbb{E}\|g_0\|_2\le G$ and $\mathrm{tr}(\Sigma_{g_0})\le V$. With $\rho=-\kappa$ the gated estimator in \eqref{eq:gtilde_def} satisfies
\begin{equation}
\big\lVert \mathbb{E}[\tilde{g}] - \nabla L \big\rVert_2 \le \kappa\, G\, \tau,
\label{eq:bias_bound_curv}
\end{equation}
and
\begin{equation}
\mathrm{tr}\big( \mathrm{Var}[\tilde{g}] \big) \le (1+\kappa^2)^2 \, V.
\label{eq:var_bound_curv}
\end{equation}
\end{theorem}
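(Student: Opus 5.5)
The plan is to write $\tilde g = A g_0$ with the random matrix $A := I + \rho M = I - \kappa M$. I would then control the mean and the covariance of $\tilde g$ separately, using only operator-norm bounds on $A$ and the two moment hypotheses $\mathbb{E}\|g_0\|_2\le G$ and $\mathrm{tr}(\Sigma_{g_0})\le V$.

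The statement leaves the joint law of $(M,g_0)$ unspecified. I will add the assumption that the mask $M$ is independent of the mini-batch gradient $g_0$. This is natural because $M$ is built from the curvature (Fisher) query rather than from $g_0$ itself, and I would state the assumption explicitly, since the constants $\kappa G\tau$ cannot be obtained without something of this kind.

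\textbf{Bias.} Since $\mathbb{E}[g_0]=\nabla L$, we have $\mathbb{E}[\tilde g]-\nabla L = -\kappa\,\mathbb{E}[M g_0]$. Taking norms and using $\|Mg_0\|_2\le\|M\|_2\|g_0\|_2$ gives
$\|\mathbb{E}[\tilde g]-\nabla L\|_2 \le \kappa\,\mathbb{E}\big[\|M\|_2\|g_0\|_2\big]$.
Independence factorizes the right-hand side as $\kappa\,\mathbb{E}\|M\|_2\,\mathbb{E}\|g_0\|_2\le\kappa\tau G$, which is \eqref{eq:bias_bound_curv}. This step is routine once independence is granted.

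\textbf{Variance.} First I need an almost-sure bound on $A$. Since $\|\rho M\|_2=\kappa\|M\|_2\le\kappa^2$, we get $\|A\|_2\le 1+\kappa^2$ almost surely. Conditioning on $M$, the conditional covariance is $\mathrm{Var}[\tilde g\mid M]=A\Sigma_{g_0}A^\top$. Its trace satisfies
$\mathrm{tr}(A\Sigma_{g_0}A^\top)=\mathrm{tr}(\Sigma_{g_0}^{1/2}A^\top A\,\Sigma_{g_0}^{1/2})\le\|A\|_2^2\,\mathrm{tr}(\Sigma_{g_0})\le(1+\kappa^2)^2V$.
Taking expectation over $M$ bounds the within-mask part of the variance by $(1+\kappa^2)^2V$.

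\textbf{Main obstacle.} By the law of total variance,
$\mathrm{tr}\,\mathrm{Var}[\tilde g] = \mathbb{E}_M\,\mathrm{tr}(A\Sigma_{g_0}A^\top) + \mathrm{tr}\,\mathrm{Var}_M\big(A\nabla L\big)$.
The second term $\kappa^2\,\mathrm{tr}\,\mathrm{Var}(M\nabla L)$ is not controlled by $V$, so the bound as written does not follow for a genuinely random mask. I would first try to show this term vanishes by reading the gate as fixed per batch: $M$ is determined by the curvature query on the current batch, so $\mathrm{Var}[\tilde g]$ is understood conditionally on $M$. Under that reading \eqref{eq:var_bound_curv} follows directly from the conditional computation above.

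If the unconditional statement is required instead, I would add the residual term explicitly. It can be bounded by $\kappa^2\,\mathbb{E}\|M\|_2^2\,\|\nabla L\|_2^2\le\kappa^4\|\nabla L\|_2^2$. I would then flag that \eqref{eq:var_bound_curv} needs either this correction or the conditional interpretation.
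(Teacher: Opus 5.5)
Your argument is correct, and for the variance it takes a different and sounder route than the paper. The bias step matches the paper's. The paper also bounds $\|\rho\,\mathbb{E}[Mg_0]\|_2$ by $\kappa\,\mathbb{E}[\|M\|_2\|g_0\|_2]$ and then splits this into $\mathbb{E}\|M\|_2\,\mathbb{E}\|g_0\|_2$. It calls that split Cauchy--Schwarz, but it really uses the independence you state explicitly.

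For the variance, the paper does not condition on $M$. It discards $-\mathbb{E}[\tilde g]\mathbb{E}[\tilde g]^\top$, bounds $\mathrm{tr}(\mathrm{Var}[\tilde g])\le\sup_\omega\|I+\rho M(\omega)\|_2^2\,\mathbb{E}\|g_0\|_2^2$, and then replaces $\mathbb{E}\|g_0\|_2^2$ by $V$. That replacement is not justified, because $\mathbb{E}\|g_0\|_2^2=\mathrm{tr}(\Sigma_{g_0})+\|\nabla L\|_2^2$. Done correctly, the paper's route gives $(1+\kappa^2)^2(V+\|\nabla L\|_2^2)$. This needs no independence, but it multiplies the mean term by the full factor $(1+\kappa^2)^2$. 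Under independence, your law-of-total-variance split gives the sharper $(1+\kappa^2)^2V+\kappa^4\|\nabla L\|_2^2$, and it isolates exactly the term that must vanish for the stated bound.

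Your worry about a genuinely random mask is justified: the statement is false in that case. Take $g_0\equiv\nabla L\neq0$, so $V=0$ is admissible. Take $M=\kappa sI$ with $s\sim\mathrm{Bernoulli}(p)$ and $0<p<\min\{1,\tau/\kappa\}$. Every hypothesis holds, including independence, yet $\mathrm{tr}\,\mathrm{Var}[\tilde g]=\kappa^4p(1-p)\|\nabla L\|_2^2>0$. So your correction term, or your conditional reading, is genuinely required.

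One caution about how you motivate that reading. An empirical Fisher on the current batch is built from the same per-sample gradients whose average is $g_0$. So an $M$ derived from it is not independent of $g_0$, and conditioning on it changes the law of $g_0$. The clean version is to take $M$ deterministic, or computed from data independent of $g_0$. Then both bounds hold without any independence hypothesis. The bias is $\kappa\|M\nabla L\|_2\le\kappa\tau\|\nabla L\|_2\le\kappa\tau G$ by Jensen. The variance is $\mathrm{tr}(A\Sigma_{g_0}A^\top)\le(1+\kappa^2)^2V$.
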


\begin{proof}
Compute expectation:
\begin{equation}
\mathbb{E}[\tilde{g}] = \mathbb{E}[g_0] + \rho\,\mathbb{E}[M g_0] = \nabla L + \rho\,\mathbb{E}[M g_0].
\end{equation}
By Cauchy--Schwarz and operator norm bounds,
\begin{equation}
\|\mathbb{E}[M g_0]\|_2 \le \mathbb{E}\big[ \|M\|_2 \, \|g_0\|_2 \big] \le \big(\mathbb{E}\|M\|_2\big)\,\mathbb{E}\|g_0\|_2 \le \tau\, G.
\end{equation}
Setting $\rho=-\kappa$ yields \eqref{eq:bias_bound_curv}.

For variance,
\begin{equation}
\mathrm{Var}[\tilde{g}] = \mathbb{E}\big[(I+\rho M) g_0 g_0^\top (I+\rho M)^\top\big] - \mathbb{E}[\tilde{g}]\mathbb{E}[\tilde{g}]^\top.
\end{equation}
Taking trace and bounding the operator norm gives
\begin{equation}
\mathrm{tr}(\mathrm{Var}[\tilde{g}]) \le \mathbb{E}\big[ \|I+\rho M\|_2^2 \, \mathrm{tr}(g_0 g_0^\top) \big] \le \sup_{\omega}\|I+\rho M(\omega)\|_2^2 \, V.
\end{equation}
Using $\|M\|_2\le\kappa$ and $\rho=-\kappa$ we have $\|I+\rho M\|_2 \le 1 + |\rho|\|M\|_2 \le 1+\kappa^2$, therefore \eqref{eq:var_bound_curv} follows.
\end{proof}

where $G$ bounds the expected gradient norm and $V$ bounds the baseline variance trace.

The result establishes the claimed near-unbiasedness $\mathbb{E}[\tilde{g}]=\nabla L + O(\kappa\tau)$ and gives the explicit variance amplification constant $(1+\kappa^2)^2$.

\subsection{Neumann truncation: closed-form certificate and doubling rule}
Let $H$ denote the Hessian approximation and suppose $\alpha\beta<1$ where $\beta=\|H\|_2$. For a vector $v$ the Neumann series gives
\begin{equation}
(I-\alpha H)^{-1} v = \sum_{k=0}^{\infty} (\alpha H)^k v.
\label{eq:neumann_series}
\end{equation}
where $\alpha$ is the inner learning rate and $\beta=\|H\|_2$.

Truncating after $K$ terms yields a residual bounded by a geometric tail.

\begin{proposition}
\label{prop:neumann_cert}
For any $\varepsilon>0$ a sufficient condition to guarantee the $K$-term Neumann approximation has residual norm below $\varepsilon$ is
\begin{equation}
K \ge \frac{\ln\!\big( \frac{\|v\|_2}{\varepsilon(1-\alpha\beta)} \big)}{\ln\!\big(\frac{1}{\alpha\beta}\big)}.
\label{eq:K_closed_form}
\end{equation}
Moreover, using an online doubling rule that starts from $K_0=1$ and doubles $K$ until the residual bound falls below $\varepsilon$ requires at most $\lceil \log_2 K \rceil$ doublings.
\end{proposition}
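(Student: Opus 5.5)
The argument is a geometric-tail bound on the truncated series, followed by counting how many doublings it takes to reach the required depth.

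\textbf{Residual bound.} Write the $K$-term approximation as $S_K v=\sum_{k=0}^{K-1}(\alpha H)^k v$. Its residual is the tail
\[
R_K v=\sum_{k=K}^{\infty}(\alpha H)^k v .
\]
By submultiplicativity of the operator norm, $\|(\alpha H)^k\|_2\le(\alpha\beta)^k$, with $\beta=\|H\|_2$. Since $\alpha\beta<1$, the series converges absolutely, so the same hypothesis that justifies \eqref{eq:neumann_series} also justifies the split into partial sum and tail. Summing the geometric series gives the closed-form certificate
\[
\|R_K v\|_2\le \frac{(\alpha\beta)^K}{1-\alpha\beta}\,\|v\|_2 .
\]
The residual is therefore below $\varepsilon$ as soon as $(\alpha\beta)^K\le \varepsilon(1-\alpha\beta)/\|v\|_2$. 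Taking logarithms and dividing by $\ln(\alpha\beta)<0$ reverses the inequality, which yields exactly \eqref{eq:K_closed_form}.

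\textbf{Degenerate cases.} These need a brief check.
\begin{itemize}
\item If $\alpha\beta=0$, every term with $k\ge1$ vanishes, so any $K\ge1$ works; the formula should be read with the convention $1/\ln(\infty)=0$.
\item If the logarithm in the numerator is negative, i.e.\ $\|v\|_2\le\varepsilon(1-\alpha\beta)$, then every $K\ge 0$ suffices and the condition holds trivially.
\end{itemize}

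\textbf{Doubling count.} Let $K^\star$ be the smallest integer satisfying \eqref{eq:K_closed_form}; this is the $K$ in the second claim. The certificate $(\alpha\beta)^K\|v\|_2/(1-\alpha\beta)$ is monotone nonincreasing in $K$. Hence the rule, which evaluates this computable bound at $K=1,2,4,\dots$, stops at the first $2^j\ge K^\star$. Starting from $K_0=1$, after $j$ doublings the depth is $2^j$. The smallest $j$ with $2^j\ge K^\star$ is $\lceil\log_2 K^\star\rceil$, so at most that many doublings are needed. As a byproduct the final depth is below $2K^\star$, so the rule overshoots by at most a factor of two; this can be stated as a remark.

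\textbf{Main obstacle.} The only delicate point is interpretive. The rule must test the computable upper bound (or, equivalently, the closed-form condition), not the true residual $\|R_Kv\|_2$, which is unknown in practice. The true residual need not be monotone in $K$ in an exploitable way, but the bound is, and the count of doublings relies on that monotonicity. I would state explicitly that the stopping criterion uses the certificate, and note that stopping on the true residual could only terminate earlier, never later, so the same count remains an upper bound.
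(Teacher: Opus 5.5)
Your proposal is correct and follows the paper's proof essentially step for step: you bound the tail by $\frac{(\alpha\beta)^K}{1-\alpha\beta}\|v\|_2$, solve for $K$ using the sign flip from $\ln(\alpha\beta)<0$, and count doublings until $2^j$ reaches the required depth. Your additions fill in points the paper's two-line proof leaves implicit: the degenerate cases, the fact that the stopping test uses the monotone certificate, and reading $K$ in the doubling claim as the minimal admissible depth.
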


\begin{proof}
The tail of the Neumann series satisfies
\begin{equation}
\begin{aligned}
\Big\| (I-\alpha H)^{-1} v - \sum_{k=0}^{K-1} (\alpha H)^k v \Big\|_2
&\le \sum_{k=K}^\infty (\alpha\beta)^k \|v\|_2 \\
&= \frac{(\alpha\beta)^K}{1-\alpha\beta}\,\|v\|_2.
\end{aligned}
\end{equation}
Requiring the right-hand side to be at most $\varepsilon$ and solving for $K$ yields \eqref{eq:K_closed_form}. Doubling $K$ successively achieves this bound in at most $\lceil \log_2 K \rceil$ steps.
\end{proof}

where $v$ is the vector to which the inverse is applied, typically $\partial \mathcal{L}_{\mathrm{val}}/\partial\theta$.

The closed-form expression explains the vertical certificate line in Figure 5 and justifies the practical doubling rule.

\subsection{EXP3.P regret bound with sub-Gaussian information-gain losses}
Model per-round information-gain loss for arm $m$ as $\ell_{t,m}\in[0,1]$ with sub-Gaussian tails. Let $K=|\mathcal{M}|$ be the number of arms and run EXP3.P for $T$ rounds.

\begin{theorem}
\label{thm:exp3p_regret}
Under bounded sub-Gaussian losses the expected regret of EXP3.P satisfies
\begin{equation}
\mathbb{E}[R_T] = O\!\big( \sqrt{T K \log K} \big).
\label{eq:exp3p_regret}
\end{equation}
This bound does not depend on the mini-batch size used to estimate losses other than through constants affecting empirical variance.
\end{theorem}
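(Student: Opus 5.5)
The plan is to reduce Theorem~\ref{thm:exp3p_regret} to the classical adversarial-bandit analysis of EXP3.P (Auer et al.). Its guarantee holds for any sequence of losses in $[0,1]$, even an adaptively chosen one. The sub-Gaussian assumption then only enters through the fact that the realized losses lie in $[0,1]$.

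\textbf{Step 1: the losses satisfy the EXP3.P hypotheses.} The losses $\ell_{t,m}$ are the surrogate of Eq.~\eqref{eq:info_gain}, computed on the current batch with the current parameters. They therefore depend on past model updates, and hence on past arm choices, so they form an adaptive adversary. EXP3.P's analysis covers exactly this case, provided $\ell_{t,m}$ is measurable with respect to the history up to round $t$ and bounded. Boundedness is assumed directly ($\ell_{t,m}\in[0,1]$). In practice it can be enforced by clipping or rescaling the surrogate. With $\sigma$-sub-Gaussian tails, clipping at level $O(\sigma\sqrt{\log T})$ changes the losses on only an event of probability $O(1/T)$, which costs an additive $O(1)$ in expected regret.

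\textbf{Step 2: run the standard EXP3.P argument.} I would cite the high-probability bound for EXP3.P and not re-derive it. Fix the exploration rate $\gamma$, the learning rate $\eta$, and the confidence bonus $\beta\asymp\sqrt{\log(K/\delta)/(TK)}$ at their standard values. The argument uses importance-weighted estimates $\hat\ell_{t,m}=\ell_{t,m}\mathbb{1}[m_t=m]/p_{t,m}$ with the optimistic bonus $\beta/p_{t,m}$. It combines the potential bound on $\log(W_{T+1}/W_1)$ with a Freedman/Bernstein martingale bound on $\sum_t(\hat\ell_{t,m}-\ell_{t,m})$. The result is that, with probability at least $1-\delta$,
\[
R_T\le c\sqrt{TK\log(K/\delta)}.
\]

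\textbf{Step 3: from high probability to expectation.} Choose $\delta=1/T$. On the failure event the regret is at most $T$, since losses lie in $[0,1]$, so that event contributes at most $1$ to the expectation. This gives
\[
\mathbb{E}[R_T]\le c\sqrt{TK\log(KT)}+1.
\]
The statement asks for $O(\sqrt{TK\log K})$ without the $\log T$. To get it, I would retune $\beta=0$ (or use the expected-regret version of the EXP3 analysis, which gives $\sqrt{2TK\log K}$ directly) and note that EXP3.P's expected regret is then controlled by the same potential argument. In that case the martingale terms vanish in expectation because the importance-weighted estimates are unbiased.

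\textbf{Step 4: independence from the batch size.} The batch size $|B|$ enters only through how the surrogate $\ell_{t,m}$ is distributed. The analysis uses only $\ell_{t,m}\in[0,1]$ and measurability, so $|B|$ affects only the clipping constant from Step 1, and hence the constants.

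\textbf{Main obstacle.} I expect the hardest step to be reconciling ``sub-Gaussian'' with ``bounded'' honestly, together with removing the $\log T$ factor in Step 3. If the losses are genuinely unbounded sub-Gaussian, the clipping argument introduces $\sigma\sqrt{\log T}$ scaling, and I would have to absorb it into the constants as the theorem's caveat allows.
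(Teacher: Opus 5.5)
Your route is the paper's own: its proof is a short appeal to the standard EXP3.P multiplicative-weights analysis with importance-weighted estimates and losses in $[0,1]$, and it never addresses the point below. For bounded losses your Steps 1, 2 and 4 are fine, and the $\delta=1/T$ conversion correctly gives $\mathbb{E}[R_T]\le c\sqrt{TK\log(KT)}+1$. The gap is your repair for removing the $\log T$ factor.

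Setting $\beta=0$ and arguing that the martingale terms vanish in expectation by unbiasedness only controls the pseudo-regret $\max_m\mathbb{E}\bigl[\sum_t(\ell_{t,I_t}-\ell_{t,m})\bigr]$. The potential argument gives $\mathbb{E}\bigl[\sum_t\ell_{t,I_t}\bigr]\le\mathbb{E}\bigl[\min_m\sum_t\hat\ell_{t,m}\bigr]+O(\sqrt{TK\log K})$. Unbiasedness lets you trade $\hat\ell_{t,m}$ for $\ell_{t,m}$ only for a fixed (predictable) arm, so it yields $\mathbb{E}\bigl[\min_m\sum_t\hat\ell_{t,m}\bigr]\le\min_m\mathbb{E}\bigl[\sum_t\ell_{t,m}\bigr]$.

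The quantity you set up in Step 3, $\mathbb{E}[R_T]=\mathbb{E}\bigl[\sum_t\ell_{t,I_t}-\min_m\sum_t\ell_{t,m}\bigr]$, compares against the arm that is best in hindsight. You would therefore need to control $\mathbb{E}\bigl[\max_m\sum_t(\hat\ell_{t,m}-\ell_{t,m})\bigr]$. With $\beta=0$ these martingale increments have conditional variance up to $K/\gamma$ per round, and nothing in your argument bounds this term at the $\sqrt{TK\log K}$ scale; that is exactly the job of the bonus in EXP3.P. Since Step 1 makes the losses adaptive (they depend on past arm choices through the model), $\mathbb{E}\bigl[\min_m\sum_t\ell_{t,m}\bigr]$ is in general strictly below $\min_m\mathbb{E}\bigl[\sum_t\ell_{t,m}\bigr]$. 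So the two notions of regret do not coincide here.

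The standard fix is to keep $\beta>0$ but tune it independently of $\delta$, $\beta\asymp\sqrt{\log K/(TK)}$, with the usual $\eta,\gamma$. This is the $\delta$-free tuning of EXP3.P in Bubeck and Cesa-Bianchi's bandit monograph. The bonus-based deviation inequality, union-bounded over the $K$ arms, then costs $\log(K/\delta)/\beta=\sqrt{TK\log K}+\sqrt{TK/\log K}\,\log(1/\delta)$. Because the algorithm no longer depends on $\delta$, for every $\delta\in(0,1)$, with probability at least $1-\delta$,
\[
R_T\le c_1\sqrt{TK\log K}+c_2\sqrt{TK/\log K}\,\log(1/\delta).
\]
Integrating this tail gives $\mathbb{E}[R_T]\le c_1\sqrt{TK\log K}+c_2\sqrt{TK/\log K}=O(\sqrt{TK\log K})$ for $K\ge 2$. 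If you instead read $R_T$ as pseudo-regret, which the paper leaves undefined, your $\beta=0$ EXP3 argument suffices on its own and the high-probability detour is unnecessary; either way, commit to one notion.

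Two smaller points:
\begin{itemize}
\item Under the stated hypothesis $\ell_{t,m}\in[0,1]$ no clipping is needed. Your fallback of absorbing a $\sigma\sqrt{\log T}$ factor into the constants is not legitimate, since that factor grows with $T$ and changes the rate; the theorem's caveat covers only the mini-batch size.
\item The training loop reads $\ell_{m^\star}$ after the optimizer step taken with $m^\star$, so the observed value depends on the current draw. Define $\ell_{t,m}$ counterfactually (the value arm $m$ would yield if played at round $t$), so that the loss vector is fixed before $I_t$ is drawn and $\hat\ell_{t,m}$ is conditionally unbiased.
\end{itemize}
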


\begin{proof}
The proof follows the multiplicative-weights analysis of EXP3.P. Using unbiased importance-weighted loss estimates and the fact that individual losses lie in $[0,1]$, standard derivations yield the adversarial regret bound $O(\sqrt{T K \log K})$. Sub-Gaussianity replaces worst-case deviations by expected deviations without changing the asymptotic dependence. The mini-batch size affects the variance of the empirical estimates of $\ell_{t,m}$ but not the asymptotic dependence on $T$ and $K$ in expectation.
\end{proof}

where $R_T$ denotes cumulative regret over $T$ rounds.

This explains empirically why small batches such as 32 can still be used for loss estimation while preserving the asymptotic convergence guarantee.

\subsection{Lipschitz spectral proxy: counterexample and ViT independence from embedding dimension}
Define the spectral proxy
\begin{equation}
L_F^{\langle m\rangle} \le \|\mathbf{W}_F\|_2 \cdot \|\mathbf{W}_{\theta_m}\|_2,
\label{eq:Lproxy_repeat}
\end{equation}
where $\mathbf{W}_F$ denotes the first linear weight matrix in the fusion network that consumes modality $m$ and $\mathbf{W}_{\theta_m}$ denotes the last linear map of encoder $f_{\theta_m}$.
where $\mathbf{W}_F$ and $\mathbf{W}_{\theta_m}$ are linear operator matrices.

A counterexample shows the necessity of architectural constraints. Consider a linear encoder $f_{\theta_m}(x)=c W_0 x$ with scalar $c\to\infty$ and fusion layer $F(u)=W_F u$. Then the Lipschitz constant grows proportionally to $c$, showing no universal bound without norm constraints.

For Vision Transformer encoders with standard layer normalization and bounded linear projections, attention and normalization operators have operator norms bounded by constants that depend on depth and number of heads but not on embedding dimension. Under the usual practice of spectral normalization, small initialization and weight decay, the operator norms $\|\mathbf{W}_F\|_2$ and $\|\mathbf{W}_{\theta_m}\|_2$ can be precomputed once and remain representative. This explains why the per-modality spectral proxy can be computed in under 0.1 seconds on CPU in practice and remain informative after swapping Bi-LSTM to ViT.

\subsection{Overall generalization error decomposition and Rademacher bound}
Let $\hat\theta$ be the learned parameter and $\theta^\star$ the population risk minimizer. Decompose the expected generalization gap into three terms
\begin{equation}
\begin{aligned}
\mathbb{E}\big[ L_{\mathrm{true}}(\hat\theta) - L_{\mathrm{true}}(\theta^\star) \big]
\le\; &\mathbb{E}\big[ L_{\mathrm{true}}(\hat\theta) - L_{\mathrm{train}}(\hat\theta) \big] \\
&+ \mathbb{E}\big[ L_{\mathrm{train}}(\hat\theta) - L_{\mathrm{train}}^{\mathrm{noSDL}}(\hat\theta) \big] \\
&+ \mathbb{E}\big[ L_{\mathrm{train}}^{\mathrm{noSDL}}(\hat\theta) - L_{\mathrm{true}}(\theta^\star) \big].
\end{aligned}
\label{eq:gen_decomp_repeat}
\end{equation}
where $L_{\mathrm{train}}^{\mathrm{noSDL}}$ denotes the empirical loss without SDL.

Under standard Lipschitz and boundedness conditions on the hypothesis class with effective parameter dimension $d_p$, the Rademacher complexity term yields
\begin{equation}
\mathbb{E}\big[ L_{\mathrm{true}}(\hat\theta) - L_{\mathrm{train}}(\hat\theta) \big] = O\!\left( \sqrt{\frac{d_p}{n}} \right),
\label{eq:rad_final}
\end{equation}
where $n$ denotes training sample size and $\mathfrak{R}_n$ denotes Rademacher complexity.
where $d_p$ is the effective parameter dimension of the hypothesis class.

The immunity bias term arises from SDL spectral collapse weighted by $\lambda$ and stable-rank coefficient $\eta$ and from curvature gating controlled by $\kappa$. Under smoothness and bounded-gradient assumptions a first-order analysis gives
\begin{equation}
\mathbb{E}\big[ L_{\mathrm{train}}(\hat\theta) - L_{\mathrm{train}}^{\mathrm{noSDL}}(\hat\theta) \big] = O(\lambda \eta) + O(\kappa^2).
\label{eq:immunity_final}
\end{equation}
The reconstruction error from generators $G_m$ contributes an approximation term $\varepsilon_{\mathrm{rec}}$.

Combining the three contributions yields the final bound
\begin{equation}
\mathbb{E}_{\mathrm{gen}} \le O\!\left(\sqrt{\frac{d_p}{n}}\right) + O(\lambda\eta) + O(\kappa^2) + \varepsilon_{\mathrm{rec}}.
\label{eq:final_gen_bound_repeat}
\end{equation}
where $\varepsilon_{\mathrm{rec}}$ denotes generator approximation error.

The decomposition justifies adaptively shrinking $\lambda,\eta,\kappa$ via bi-level hyper-gradient descent as training proceeds in order to reduce bias while preserving robustness.

\subsection{Summary}
The results above provide the requested Fr\'echet derivative expression, a deterministic certificate and a high-probability lower bound for the empirical covariance spectrum, explicit bias and variance amplification constants for the curvature-gated mask, a closed-form Neumann truncation certificate with doubling-rule guarantee, a regret bound for EXP3.P under sub-Gaussian information-gain losses, arguments for the Lipschitz spectral proxy and its practical computability for ViT encoders, and a Rademacher-complexity-based generalization decomposition that yields the stated combined bound. If numerical instantiation of the hidden constants is desired, substitute the problem-specific sub-Gaussian parameter $\sigma_x$, gradient norm bound $G$, variance trace $V$, and the Hessian spectral norm $\beta$ into the above expressions to obtain fully explicit inequalities.

\section{Constants Instantiation}
\label{app:constants}

\begin{table}[h]
  \centering
  \caption{Empirical instantiation of ModalImmune theoretical constants on CMU-MOSI.}
  \label{tab:constants}
  \resizebox{0.66\linewidth}{!}{%
  \begin{tabular}{@{}lll@{}}
    \toprule
    Constant & Meaning & Empirical value (mean of 3 runs) \\
    \midrule
    $C$ & Bernstein matrix concentration constant & $0.47$ \\
    $\sigma_x$ & sub-Gaussian parameter of embedding $z$ & $1.8$ \\
    $\beta$ & Hessian spectral norm $\|H\|_2$ & $6.1$ \\
    $\alpha$ & inner learning rate & $5\times 10^{-3}$ \\
    $G$ & gradient norm bound $\mathbb{E}\|g_0\|_2$ & $4.2$ \\
    $V$ & trace bound $\operatorname{tr}(\Sigma_{g_0})$ & $190$ \\
    \bottomrule
  \end{tabular}%
  }
\end{table}

The empirical procedure used to obtain the numbers in Table~\ref{tab:constants} is as follows.  Ten thousand embeddings were sampled prior to training and the method-of-moments estimator described in \cite{hsu2012tail} was used to fit the sub-Gaussian parameter, from which the Bernstein constant $C$ was derived.  The Hessian spectral norm $\beta$ was estimated by running Power Iteration at the start of each epoch on one thousand random mini-batches sampled from the training set and taking the maximum observed value across those trials.  Gradient statistics used to produce $G$ and $V$ were collected over the first five thousand optimizer steps without gradient clipping; the reported values correspond to the empirical 95th percentiles of the observed gradient norms and trace estimates respectively.  These procedures are deterministic given the random seeds and dataset partition and are intended to provide reproducible, practical instantiations of the abstract constants that appear in the theoretical certificates.

\section{Implementation Details}
\label{app:implementation}

The online Neumann truncation tolerance is set relative to the current validation hyper-gradient norm.  Specifically we use
\begin{equation}
\varepsilon \;=\; 0.02 \times \big\| \tfrac{\partial \mathcal{L}_{\mathrm{val}}}{\partial \theta} \big\|_2,
\label{eq:neumann_epsilon}
\end{equation}
where $\|\cdot\|_2$ denotes the Euclidean norm of the validation gradient and the tolerance is recomputed once per epoch.

EXP3.P is configured with a conservative, theoretically-informed learning rate and a small exploration constant.  The learning rate is set elementwise by
\begin{equation}
\eta_{\mathrm{exp}} \;=\; \min\!\Big\{0.5,\; \frac{\ln K}{K T}\Big\},
\label{eq:exp3_eta}
\end{equation}
where $K$ denotes the number of arms (modalities) and $T$ denotes the planned number of bandit rounds.  The exploration parameter is fixed to $\gamma=0.05$.  Theoretical regret bounds are expressed per-sample and therefore the asymptotic guarantee does not depend on the employed mini-batch size; in practice small batches (for example 32) only affect empirical variance constants but do not change the established $O(\sqrt{T K \log K})$ scaling in expectation.

The curvature threshold used by curvature-gated masking is defined by
\begin{equation}
\tau \;=\; \frac{0.01}{\mathrm{lr}_{\mathrm{enc}}},
\label{eq:tau_def}
\end{equation}
where $\mathrm{lr}_{\mathrm{enc}}$ denotes the encoder learning rate at the current epoch and is taken as the instantaneous value produced by the cosine annealing schedule.  This choice ties the curvature sensitivity to the current optimization scale and avoids manual retuning across different training phases.

Neumann inversion is initialized with a single-term approximation and doubled until the residual satisfies the tolerance in \eqref{eq:neumann_epsilon}.  Concretely, the initial truncation depth is
\begin{equation}
K_0 \;=\; 1,
\label{eq:K0}
\end{equation}
and after each doubling the residual bound is re-evaluated; the truncation depth is capped at $K_{\max}=10$ to bound computational cost.  When the residual certificate falls below $\varepsilon$ we accept the current $K$ and use the resulting Neumann approximation for the hyper-gradient computation.

All hyperparameter defaults and the procedures above are used in the experiments unless stated otherwise in the main text.  These explicit implementation choices remove ambiguity and allow the theoretical certificates reported in Section~\ref{sec:theory_details} to be instantiated numerically for reproducible evaluation.

\end{document}